\newcommand*\circled[1]{\tikz[baseline=(char.base)]{\node[shape=circle,draw,inner sep=0.7pt] (char) {#1};}}
\title{Train 'n Trade: Foundations of Parameter Markets}
 \author[$\dagger$]{Tzu-Heng~Huang}
 \author[$\dagger$]{Harit~Vishwakarma}
 \author[$\dagger$]{Frederic~Sala}
 \affil[$\dagger$]{University of Wisconsin-Madison}
 \affil[ ]{\footnotesize{\texttt{\{thuang273, hvishwakarma, fredsala\}@wisc.edu}}}
\begin{document}

\maketitle

\begin{abstract}
Organizations typically train large models individually.
This is costly and time-consuming, particularly for large-scale foundation models.
Such vertical production is known to be suboptimal.
Inspired by this economic insight, we ask whether it is possible to leverage others' expertise by trading the constituent parts in models, i.e., sets of weights, as if they were market commodities.
While recent advances in aligning and interpolating models suggest that doing so may be possible, a number of fundamental questions must be answered to create viable parameter markets.
In this work, we address these basic questions, propose a framework containing the infrastructure necessary for market operations to take place, study strategies for exchanging parameters, and offer means for agents to monetize parameters. 
%
%
%
Excitingly, compared to agents who train siloed models from scratch, 
we show that it is possible to mutually gain by using the market, even in competitive settings. 
This suggests that the notion of parameter markets may be a useful paradigm for improving large-scale model training in the future.\footnote{Appeared in the 37th Conference on Neural Information Processing Systems (NeurIPS 2023).}
%
\end{abstract}
\section{Introduction} 
%
Costs to build powerful state-of-the-art machine learning models, such as foundation models (e.g., GPT-3 \cite{brown2020language}, T5 \cite{raffel2020exploring}, PaLM \cite{chowdhery2022palm}, BLOOM-176B \cite{scao2022bloom} and OPT-175B \cite{zhang2022opt}), have increased enormously.
These costs can easily reach millions of dollars and even exceed that amount.
For example, training the 11 billion-parameter T5 model is estimated to take around 1.3 million dollars for a single training run \cite{sharir2020cost}. 
Unfortunately, few organizations and fewer individuals are sufficiently well-capitalized to afford such training costs.


One approach to reduce expense is to broadly distribute training workloads, such as in decentralized training \cite{he2018cola, tang2018d, luo2019hop, yuan2022decentralized}. 
However, this is limiting; even in the decentralized setting, participants must first agree to train a shared model and at least minimally coordinate the training process.
For this reason, such techniques cannot be applied when organizations develop different models for different purposes on different timelines.
In these scenarios---the most common solution in large-scale model development---models are trained individually regardless of high cost.



A natural question is whether such \emph{vertical production} can be broken down into parts that can be more easily built, re-used, and exchanged.  
Taking inspiration from other areas of manufacturing, we observe that most products are not built in vertical silos, but from components that are traded in markets.
Economic agents, even when competing against each other, buy, sell, and trade such components to leverage the expertise of other agents so that production costs can be lowered.


This leads us to ask whether subsets of trained weights can be thought of as constituent parts to be bought and sold on \emph{parameter markets}.
Such markets may provide mutual benefits for both buyers and sellers. 
Buyers are able to purchase well-trained parameter sets directly as commodities to leverage the training expertise of others and then use them to improve model performance.
%
Sellers (i.e., owners of partially or fully-trained models) are able to monetize parameters as a second profit center, in addition to the downstream activity enabled by using their models.

\paragraph{Challenges and Proposed Approach.} How can we build and use such markets? To answer this, we must first overcome several obstacles.
An immediate challenge is the notion of \emph{alignment}: models trained in isolation may not have parameter sets that correspond in any natural way, especially if these models have differing purposes. 
Excitingly, recent work suggests that it is possible to align model components and then merge them via linear interpolation \cite{garipov2018loss, frankle2020linear, tatro2020optimizing, mirzadeh2020linear, entezari2021role, ainsworth2022git, jordan2022repair}.
Similarly, it is known that training data can be potentially recovered from weights or gradients so privacy is an additional challenge \cite{du2019robust, zhu2021privacy, pasquini2022privacy}. 
We tackle the orthogonal question: 


\begin{tcolorbox}[top=3pt,bottom=3pt,left=3pt,right=3pt]
If model alignment and sufficient privacy can indeed be assured, how can a viable parameter marketplace be designed?
\end{tcolorbox}


There are three fundamental challenges in doing so: 
\begin{enumerate}
    \item \textit{How should agents (users in the market) decide to perform transactions?} 
    Discovering and verifying useful model parameter sets on the market without prior knowledge is challenging for buyers. 
    To address this issue, we introduce a trusted third-party organization, the \emph{broker}.
    The broker enables a ``try-before-purchase'' mechanism for buyers to examine the quality of parameters. 
    This is a common approach in the existing works on \emph{data} marketplaces \cite{song2021try, azcoitia2022try} as it allows buyers to evaluate the quality of data before purchase. 
    Doing so with parameters rather than data presents additional complications that must be resolved. 

    \item \textit{What rewards may agents gain?}
    An important consideration when using such a framework is whether \emph{agents can expect to see any gains}. Indeed, if the process of exchanging and validating parameter sets does not yield any improvements when compared to siloed training, there is no reason to participate in parameter markets. We provide theoretical and empirical analyses validating the improvements gained from participating in such markets. 
    
    \item \textit{How to monetize model parameters in a competitive market?} 
    In settings where parameters are bought and sold, rather than bartered, it can be challenging to price these assets.
    Both the seller and the buyer may not have a clear understanding of the other's valuation of the parameters, which makes it difficult for each to maximize their revenues in a trade.
    To address this issue, we apply a Bayesian-optimal pricing mechanism \cite{myerson1981optimal} to provide valuation of parameter sets and study Nash bargaining \cite{nash1950bargaining} to find market prices in negotiation.
\end{enumerate}

\paragraph{Results and Contributions.}
We propose and formulate a viable marketplace to trade parameters for machine learning models. We validate it in both theoretical and empirical settings.
Theoretically, in basic scenarios we show how agent training converges faster through purchasing parameters in the market.
We offer bounds on the improvement gained via trading when training linear models. 
Empirically, we conduct experiments in a variety of practical scenarios to validate the framework's effectiveness.  
We demonstrate that compared to agents who stay outside the market and train models in isolation, participating in parameter trading, even trading subsets of the full set of model parameters, provides benefits to efficient model training and better model performance.
For example, when training and trading parameters of ResNet20 on TinyImageNet, two agents improve their performance by gaining accuracy improvements of \textbf{+10.02\%} and \textbf{+15.93\%} versus separate training. 
We also demonstrate the success of price estimation to monetize parameters and negotiate prices. 


\section{Related Works}
First, we describe two related concepts: \emph{data} and \emph{model}---as opposed to \emph{parameter}---marketplaces. We then give background on model alignment techniques, which are used in our framework. 

\paragraph{Data Marketplaces.}
Data is a key ingredient in machine learning pipelines. 
There is a rich vein of work proposing infrastructure to trade data as a commodity \cite{song2021try, azcoitia2022try, ghorbani2019data, agarwal2019marketplace, fernandez2020data, chen2022selling, jagadeesan2022competition}. 
%
%
Such marketplaces have also been instantiated in industry, including in services such as Amazon Web Services (AWS) Data Exchange, Microsoft's Azure Data Marketplace, and Google's Cloud Data Catalog. 
Such research; however, cannot be directly applied to trading parameters.
It is relatively easy to evaluate the valuation of a data trade using basic statistical measurements.
In contrast, the value of parameters is challenging to measure, as it can only be determined after testing and depends on the model's performance. 

\paragraph{Model Marketplaces.}
There have been several efforts to build markets to trade \emph{entire model instances} trained by a centralized broker \cite{chen2019towards, liu2021dealer}. Two major obstacles that need to be overcome are determining the value and pricing models, and safeguarding privacy. To address the former issue, a noise-injection mechanism has been suggested, which involves assessing the accuracy of an optimal model with random Gaussian noise to determine its worth and creating a price-error curve for selling it on the market \cite{chen2019towards}. The latter issue has been tackled by proposing a system that apply differential privacy while still maximizing revenue \cite{liu2021dealer}. In contrast to trading entire model instances for downstream use, parameter markets are far more refined, enabling each user in the market to train their own models for their own purposes while gaining from others' training runs.

\paragraph{Model Alignment.}
Models that are trained with different batch orders or initialization weights may not be properly aligned.
Directly merging purchased parameters through interpolation may fail.
The process of aligning parameters in model training is therefore critical.
Recent studies have explored the geometric relationship between models \cite{garipov2018loss, frankle2020linear, tatro2020optimizing, mirzadeh2020linear, entezari2021role} and proposed methods for aligning two sets of neural network parameters \cite{ainsworth2022git, jordan2022repair}.
We use such techniques as a building block in our proposed market infrastructure. 
Through proper model alignments, we expect agents are able to find and purchase desired parameter sets in the market.
\section{A Framework for Parameter Markets}
We provide a general description of the proposed marketplace and then discuss each component in depth. 


\begin{wrapfigure}{r}{0.5\textwidth}
    \begin{center}  
    \vspace{-28pt}
    \includegraphics[width=\textwidth]{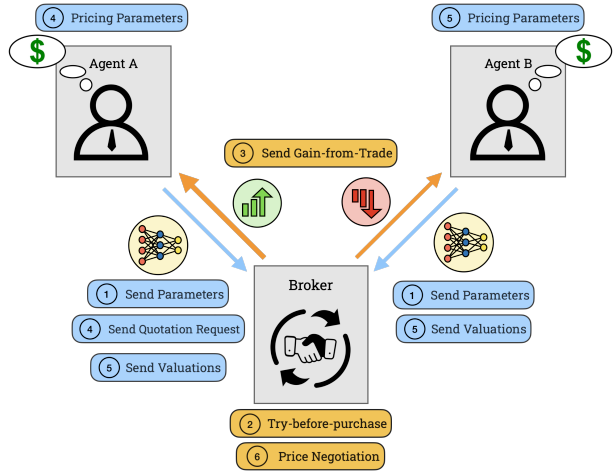}
    \end{center}
    \centering\caption{Overall workflow in a two-agent market. Blue and orange blocks represent actions taken by agents and the broker, respectively. In this example, agent $A$ is informed of a potential gain through purchasing agent $B$'s parameters. Hence, agent $A$ sends a quotation request to inquire about purchasing parameters. Then, broker helps both sides negotiate on the price of agent $B$'s parameters.}
    \label{fig:framework}
    \vspace{-3pt}
\end{wrapfigure}

\subsection{General Marketplace Framework}
%
Figure \ref{fig:framework} depicts a two-agent version of the marketplace.
Multiple agents training models for potentially different tasks seek to buy or sell sets of parameters.
Buying well-trained parameter sets enables reaching a goal performance faster while selling such parameters produces profits. 

A high-level description of the trading process follows.
First, agents send their own parameters to the market \circled{1}.
A third party (the broker) operates a ``try-before-purchase'' routine to align and merge parameters for buyers \circled{2}. 
The broker privately informs agents of the gains or losses resulting from a potential trade using validation data \circled{3}.
Based on this information, a buyer values a seller's parameters, and then makes a trading decision.
If the buyer is willing to purchase, a quote request is sent out, and both sides generate and report their valuations to the broker \circled{4}, \circled{5}.
The broker helps both parties negotiate the price until they reach an agreement \circled{6}.
Afterwards, the broker ships parameters to the buyer and transfers the payment to the seller, completing the trade.

\subsection{Market Setup}
In the following sections, we discuss each foundational concept in parameter markets. We first fix some notation. For agent $u$,  let $D_u =\{s_{u,i}\}_{i=1}^{n_u}$ be the samples drawn from a distribution $\mathcal{D}^{u}$ supported on $\mathcal{S}$. At round $t$, let $\theta_u^t \in \mathbb{R}^d$ be trained parameters that agent $u$ has access to, and let $\hat{\cal{L}}_u(\theta_u^t)$ be the empirical loss of agent $u$ for the corresponding model measured on data $D_u$. The empirical loss is defined with the agents' loss function $\ell$ in a standard fashion, 
\[
 \hat{\mathcal{L}}_u(\theta^t_u) := \frac{1}{n_u}\sum_{i=1}^{n_u} \ell  (\theta_u^t, s_{u,i} ).
\]

While our framework can handle general settings (i.e., any well-trained parameter sets can be traded), we focus on supervised learning for ease of exposition. We assume $\mathcal{S} = \mathcal{X} \times \mathcal{Y}$ where $\mathcal{X}$ and $\mathcal{Y}$ are the instance and label spaces respectively. Thus, $s_{u,i} = (x_{u,i}, y_{u,i})$, and also denote $D_u$ as $(X_u,Y_u)$ where $X_u$ is the set of points $x_{u,i}$ and $Y_u$ is the set of corresponding labels. We drop the superscript $u$ when it is otherwise clear from the context.


\paragraph{Agents and Broker.} 
For simplicity, suppose there are two agents ${A,B}$ in the market. Agent $A$ has $n_a$ data points $(X_a, Y_a)$, and agent $B$ has $n_b$ data points $(X_b,Y_b)$. There is also a \emph{trusted} third party, the broker $Z$, helping the two agents operate in the market impartially. The broker has access to a validation dataset $(X_z, Y_z)$ of size $n_z$ with noiseless samples from both distributions. 

In the proposed trading framework, the two agents train their models locally first and then trade with each other. At the beginning of round $t$, agents perform a standard gradient descent update using their own datasets. That is, $
\dot\theta_u^t := \theta_u^{t-1} - \eta \nabla \hat{\mathcal{L}}_u(\theta_u^{t-1}), u \in \{a, b\}$,
where $\eta$ indicates the step size. 


\paragraph{Using Purchased Parameters.} In the market, parameters can be bought or sold. To use acquired parameters, several model alignment techniques can be employed \cite{ainsworth2022git, jordan2022repair}. For a potential trade, broker aligns seller's parameter sets to match buyer's. Post-alignment, the parameters can be merged as a straightforward convex combination. For instance, if agent $A$ is willing to buy parameters from agent $B$, and a trade occurs eventually, the post-alignment combination will have the form $\bar\theta_a^t := (1 - \alpha) \dot\theta_a^t + \alpha \dot\theta_b^t, \quad \alpha \in (0, 1]$. 
Here $\alpha$ and (for agent $B$, $\beta$) are weights indicating the portion of trained parameters from the seller side in the merged parameters. 
\subsection{Try-Before-Purchase Mechanism} 
However, before making any trading decision, agents are unaware of the potential benefits they can achieve through purchasing. Our proposal is for a neutral broker to implement a ``try-before-purchase'' mechanism which assists agents. It does so by helping them align and merge parameters then pre-evaluate their quality by using the broker's dataset $(X_z, Y_z)$. In the try-before-purchase mechanism, the broker can pick the optimized weights $\alpha$ or $\beta$ for buyer agents by minimizing the empirical loss. We write 
$
\alpha := \arg\min_{\nu \in (0, 1]}\hat{\cal{L}}_z\big( (1 - \nu) \dot\theta_a^t + \nu \dot\theta_b^t\big)$ and $ \beta := \arg\min_{\nu \in (0, 1]}\hat{\mathcal{L}}_z\big( (1 - \nu) \dot\theta_b^t + \nu \dot\theta_a^t \big)$.


Using the optimized purchased weights, the broker calculates and communicates to agents their gain or loss from the trade in a confidential manner. We denote the \emph{gain-from-trade} for agent $u$ by $\Delta_u^t, u \in \{a, b\}$, which serves as prior knowledge to help agents make informed decisions about whether to make purchases or not. Generally, the notion of gain-from-trade is to compare relative improvement versus not trading.  The trading benefits can be expressed in various ways, such as the difference between agent's loss before and after the trade. For example, it might take the form $\Delta_u^t = \hat{\mathcal{L}}_z(\dot\theta_u^t) - \hat{\mathcal{L}}_z(\bar\theta_u^t).$ Other ways to define the gain-from-trade include using the relative improvement ratio on the empirical loss or improvement ratio on the (estimated) parameter recovery error.

If the gain-from-trade $\Delta_u^t$ does not indicate any benefit for buyer agent $u$, the agent will not be willing to make a purchase. Consequently, no quote request will be sent out, and no trade will take place. In such a scenario, the parameters $\dot \theta_u^t$ will remain with agent $u$ until the next round of gradient descent. The final parameters at the end of round $t$, denoted by $\theta^{t}_u$, can be either $\bar\theta^t_u$ or $\dot\theta^t_u$. To indicate a trade, we use the indicator variable $I_u^t$:
\[
\theta^{t}_u := (1-I_u^{t})\cdot \dot\theta^{t}_u + I_u^{t}\cdot \bar\theta_u^t \quad \text{ where } I_u^{t} = 
    \begin{cases} 1 \quad \text{if agent $u$ buys parameters, } \\
    0 \quad \text{if agent $u$ does not make a purchase.}
    \end{cases}  
\]
\subsection{Valuation and Pricing Mechanism}
\label{sec: pricing mech}
Once the gain-from-trade has been determined and trading decisions have been made, quote requests with purchased weights are circulated throughout the market. Both buyers and sellers begin to assess various parameters in order to negotiate prices. Each agent has their own private valuation function, denoted by  $v_u: \mathbb{R}^d \mapsto \mathbb{R}^+$, which quantifies their trading benefits produced by specific parameters $\theta \in \mathbb{R}^d$. %
The valuation function of each agent is private and is not exposed to others. For instance, agent $A$'s valuation function is denoted by $v_a(\dot{\theta}_b^t)$ for the value of agent $B$'s parameters for purchasing, and $v_a(\dot{\theta}_a^t)$ is the value of their own parameters for selling. In order to generate revenue from a trade, $v_a(\dot{\theta}_b^t)$ is the highest price that agent $A$ is willing to bid for purchase, while $v_a(\dot{\theta}_a^t)$ is the lowest price that agent $A$ is willing to ask for in order to sell.

Once valuations are completed, the broker---in their role as an impartial middleman---assists in bargaining the market price to maximize revenue for both sides. The negotiation process for prices continues until a mutually acceptable market price for parameters is reached.
When a buyer offers to pay a lower price for certain parameters than the seller is willing to accept, the trade cannot be fulfilled. To analyze this negotiation process, we treat it as a Nash bargaining problem \cite{nash1950bargaining}. We assume that the broker is knowledgeable about the valuations of both parties (not the valuation functions themselves) and sets the price by maximizing the revenue of both agents impartially. An agent's revenue, defined as $U_u$, is derived from two sources: the profit earned from selling self-parameters and the profit gained from buying parameters. We employ the popular Cobb-Douglas function to maximize both agents' revenue \cite{cobb1928theory}. We denote the market price for agent $u$'s parameters at round $t$ as $P^t_{u} \in R^+$, then formulate the problem accordingly. Set $U_a(P_a^t, P_b^t) := \big ( P^t_a - v_a(\dot\theta_a^t) \big ) +  \big ( v_a(\dot\theta_b^t) - P^t_b \big )$ and $ U_b(P_b^t, P_a^t) := \big ( P^t_b - v_b(\dot\theta_b^t) \big ) + \big  ( v_b(\dot\theta_a^t) - P^t_a \big )$. Then we have the problem 

\begin{align*}
& \argmax_{P^t_a, P^t_b} \quad U_a(P_a^t, P_b^t) \times U_b(P_b^t, P_a^t) \\
& \textrm{s.t.} \qquad P^t_{a} \in \big [ v_a(\dot\theta_a^t), v_b(\dot\theta_a^t) \big ], \quad P^t_{b} \in \big [ v_b(\dot\theta_b^t), v_a(\dot\theta_b^t) \big ].
\end{align*}


By solving this problem, the broker can determine the difference in price, denoted by $\Delta P^{t}_{ab}$, using
\begin{equation}
\Delta P^{t}_{ab} = P^t_a - P^t_b = \frac{1}{2}\Big (  v_b(\dot\theta_a^t) + v_a(\dot\theta_a^t)  - v_a(\dot\theta_b^t) - v_b(\dot\theta_b^t)  \Big ).
\label{eq:price difference}
\end{equation}
The steps for solving this problem are shown in the Appendix \ref{app:price}. The resulting price difference $\Delta P^{t}_{ab}$ represents the amount of money that needs to be transferred between parties. 

\section{Instantiating the Market: Concrete Examples}
\label{sec: init}
Now we give a concrete instantiation of the market that we described in the previous section.

\paragraph{Valuations.} There are various ways for agents to define valuation functions (i.e. based on agent's preference, training budget, or model performance). Here we assume that in the market, agents to purchase use gain-from-trade $\Delta^t_u$, which can be seen as a notion of relative performance improvement, to assess the value of parameters so that $v_u(\dot \theta ^t_{u'}) = \Delta^t_u $, where $u'$ represents their seller agent. 

However, assessing the value of self-parameters for agent $u$, who is a seller, is a difficult task as there is no clear information available from the broker regarding the quality of such parameters. The best approach for a seller to maximize profit is to set a price as close as possible to the buyer's valuation, which is the highest price that the buyer is willing to pay. To arrive at this \emph{virtual valuation}, we use the Bayesian-optimal pricing mechanism described in Myerson's work \cite{myerson1981optimal}. This mechanism enables the seller to monetize self-parameters. Under this Bayesian mechanism, we assume that the seller is also aware that the buyer's valuation arises from gain-from-trade, and that their valuation function is derived from a known probability distribution. We discuss these common priors in Appendix~\ref{app:limitations}.

Suppose the buyer's valuation has a cumulative distribution function $F_v$. If the seller sets a price of $P$, the probability that the buyer is willing to purchase is $\mathbb{P}(P < v) = 1 - \mathbb{P}(v \le P) = 1 - F_v(P)$.
The expected revenue for the seller is $P \times (1 - F_v(P))$. Hence, the optimal price to ask for can be found by maximizing the expected revenue, and it satisfies
\begin{align}
    P^* := \frac{1 - F_v(P^*)}{F'_v(P^*)}.
\end{align}

\paragraph{Linear Model Case Study.} 
To further illustrate a concrete example of the seller's virtual valuation, we consider pricing parameters for training linear models. For simplicity, we assume that the broker knows the true parameters $\theta^*$---though this is not necessary in practice, as these can be approximated using the broker's validation dataset. Both agents' data is sampled from the same distribution so that true parameters are identical. Additionally, we take the gain-from-trade revealed by broker to be the ratio of squared parameter estimation error. We write 
\begin{align}
    \Delta ^t_u := \frac{\|\dot{\theta}_u^t - \theta^*\|^2_2}{\|\bar{\theta}_u^t - \theta^*\|^2_2}.
\end{align}

If $\Delta^t_u > 1$, the agent's model is able to get closer to the true parameter ($\theta^*$) by purchasing. We use agent $A$ to illustrate the sale of their parameters $\dot\theta^t_a$ to agent $B$. First, we show how agent $A$ determines bounds for the buyer's valuation $v_b(\dot\theta^t_a)$. Recall that once agent $B$ expresses a willingness to purchase, a quote request with their purchased weight $\beta$ will be communicated to agent $A$ through the broker. Therefore, when agent $A$ evaluates self-parameters, the broker provides three pieces of information: the buyer's purchased weight $\beta$, agent $A$'s purchased weight $\alpha$, and the gain-from-trade $\Delta_a^t$. In this setting, we have that



\begin{theorem}
\label{thm:bound main}
\textbf{Bounds on Buyer's Gain-from-Trade}: 
In the linear model setting, by knowing $\Delta^t_a$ and weights $\alpha$, $\beta$, agent $A$ can obtain bounds on the gain-from-trade of agent $B$ given by:
\[
\left(\frac{1 - \sqrt{\Delta^t_a}(1 - \alpha)}{(1 - \beta) + \sqrt{\Delta^t_a}(1 - \alpha - \beta + 2\alpha\beta)}\right)^2 \le \Delta^t_b \le \left(\frac{1 + \sqrt{\Delta^t_a}(1 - \alpha)}{(1 - \beta) - \sqrt{\Delta^t_a}(1 - \alpha - \beta + 2\alpha\beta)}\right )^2. \\
\]    
\end{theorem}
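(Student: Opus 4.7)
The plan is to translate $\Delta_a^t$ and $\Delta_b^t$ into ratios of error-vector norms and then relate them by two successive applications of the triangle inequality. Let $e_a := \dot\theta_a^t - \theta^*$ and $e_b := \dot\theta_b^t - \theta^*$; from the definition of the convex merge one has $\bar\theta_a^t - \theta^* = (1-\alpha) e_a + \alpha e_b$ and $\bar\theta_b^t - \theta^* = \beta e_a + (1-\beta) e_b$, so $\sqrt{\Delta_a^t} = \|e_a\|/\|(1-\alpha) e_a + \alpha e_b\|$ and $\sqrt{\Delta_b^t} = \|e_b\|/\|\beta e_a + (1-\beta) e_b\|$. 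All that is needed is to sandwich $\|e_b\|$ and $\|\bar\theta_b^t - \theta^*\|$ in terms of $\sqrt{\Delta_a^t}$, $\alpha$, and $\beta$: the unknown scale $\|e_a\|$ will cancel cleanly in the final ratio.

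First I would bound $\|e_b\|$. Isolating it from the first identity gives $\alpha e_b = (\bar\theta_a^t-\theta^*) - (1-\alpha) e_a$, and using both the triangle and reverse-triangle inequalities together with the substitution $\|\bar\theta_a^t-\theta^*\| = \|e_a\|/\sqrt{\Delta_a^t}$ yields
\[
\frac{\|e_a\|}{\alpha\sqrt{\Delta_a^t}}\bigl(1 - (1-\alpha)\sqrt{\Delta_a^t}\bigr) \;\le\; \|e_b\| \;\le\; \frac{\|e_a\|}{\alpha\sqrt{\Delta_a^t}}\bigl(1 + (1-\alpha)\sqrt{\Delta_a^t}\bigr),
\]
where the mild condition $\sqrt{\Delta_a^t}(1-\alpha) < 1$ is invoked to strip the absolute value from the lower bound.

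Next I would bound $\|\bar\theta_b^t - \theta^*\| = \|\beta e_a + (1-\beta) e_b\|$ by another triangle/reverse-triangle step, substituting the upper (respectively lower) bound on $\|e_b\|$ in the direction that weakens the inequality. Collecting terms and invoking the algebraic identity $(1-\alpha)(1-\beta) + \alpha\beta = 1 - \alpha - \beta + 2\alpha\beta$ gives
\[
\frac{\|e_a\|}{\alpha\sqrt{\Delta_a^t}}\bigl[(1-\beta) - \sqrt{\Delta_a^t}(1-\alpha-\beta+2\alpha\beta)\bigr] \,\le\, \|\bar\theta_b^t - \theta^*\| \,\le\, \frac{\|e_a\|}{\alpha\sqrt{\Delta_a^t}}\bigl[(1-\beta) + \sqrt{\Delta_a^t}(1-\alpha-\beta+2\alpha\beta)\bigr].
\]

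To conclude I would pair the lower bound on $\|e_b\|$ with the upper bound on $\|\bar\theta_b^t-\theta^*\|$ to lower-bound $\sqrt{\Delta_b^t}$, and the opposite pairing to upper-bound it. The prefactors $\|e_a\|/(\alpha\sqrt{\Delta_a^t})$ cancel, leaving exactly the two expressions inside the outer squares of the theorem statement; squaring yields the result. The main obstacle is really bookkeeping rather than analytic depth: aligning each inequality in the correct direction, tracking the regularity conditions under which the numerator $1 - \sqrt{\Delta_a^t}(1-\alpha)$ and the denominator $(1-\beta) - \sqrt{\Delta_a^t}(1-\alpha-\beta+2\alpha\beta)$ remain positive so the bounds are non-vacuous, and spotting the collapse $(1-\alpha)(1-\beta)+\alpha\beta = 1-\alpha-\beta+2\alpha\beta$ that produces the compact form appearing in the statement.
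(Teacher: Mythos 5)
Your proposal is correct and follows essentially the same route as the paper: the paper's Lemmas \ref{lem:nb-ns, appendix} and \ref{lem:nb-s, appendix} are exactly your two sandwiching steps (isolating $e_b$ from the merge identity, then bounding $\|\bar\theta_b^t-\theta^*\|$), carried out on squared norms with Cauchy--Schwarz on the cross terms rather than your triangle/reverse-triangle phrasing, and the final pairing and cancellation of $\|e_a\|/(\alpha\sqrt{\Delta_a^t})$ is identical. Your explicit tracking of the positivity conditions on $1-\sqrt{\Delta_a^t}(1-\alpha)$ and $(1-\beta)-\sqrt{\Delta_a^t}(1-\alpha-\beta+2\alpha\beta)$ is a point the paper leaves implicit.
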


\paragraph{Discussion.} Theorem \ref{thm:bound main} states that by knowing information disclosed by the broker (including purchased weights $\alpha, \beta$), the seller agent $A$ can find the upper and lower bounds of the buyer's gain-from-trade, $\Delta^t_b$. This information can then be used to estimate the value that the buyer places on the item. Using the Bayesian optimal-pricing mechanism, and taking into account the  known probability distribution, the seller can estimate the price to determine their own virtual valuation.

Furthermore, the optimal scenario occurs when the seller agent values self-parameters exactly as the buyer does. In this case, based on Eq. \eqref{eq:price difference}, the broker will set the transfer payment as $\Delta P^{t}_{ab} = P^t_a - P^t_b = v_b(\dot\theta^t_a) - v_a(\dot\theta^t_b)$. Hence, the transfer payment is equal to the difference between the gain-from-trade of the two agents, where $\Delta P^{t}_{ab} = \Delta^t_a - \Delta^t_b$.


\begin{algorithm}[t] \small
\caption{Single Round of Parameter Trading}
\label{alg:trade}
\begin{algorithmic}
\Require $(X_a, Y_a), (X_b, Y_b), (X_z, Y_z), \theta^*, \theta_a^{t-1}, \theta_b^{t-1}$
\Ensure $\theta_b^t$
\State $\dot\theta_u^t \gets \theta_u^{t-1} - \eta \nabla \hat{\mathcal{L}}_u(\theta^{t-1}_u), u \in \{a, b\}$ \Comment{agents' local training}
\State $\bar\theta_a^t = (1 - \alpha)\dot\theta_a^{t} + \alpha\dot\theta_b^{t}, \quad \alpha = \arg\min_{\nu \in (0, 1]}\hat{\mathcal{L}}_z\big((1 - \nu) \dot\theta_a^t + \nu \dot\theta_b^t\big)$ \Comment{broker's try-before-purchase}
\State $\bar\theta_b^t = (1 - \beta)\dot\theta_b^{t} + \beta\dot\theta_a^{t}, \quad \beta = \arg\min_{\nu \in (0, 1]}\hat{\mathcal{L}}_z\big((1 - \nu) \dot\theta_b^t + \nu \dot\theta_a^t \big)$ \Comment{broker's try-before-purchase}
\State $\Delta_u^t = \frac{\|\dot\theta_u^t - \theta^*\|^2_2}{\|\bar\theta_u^t - \theta^*\|^2_2}, u \in \{a, b\}$ \Comment{inform agents about gain-from-trade}
\If{$\Delta_b^t > 1$} \Comment{agent $B$ sends a quotation request with $\beta$}
    \State agent $A, B$ provide valuations to broker \Comment{agent $A$'s valuation is estimated by the bounds of $\Delta_b^t$}
    \If{$v_b(\dot\theta^t_a) \ge v_a(\dot\theta^t_a)$}
    \State transferred payment for buying $\dot\theta^t_a$ after negotiation is set to $\big ( v_b(\dot\theta^t_a) + v_a(\dot\theta^t_a) \big ) / 2$
    \State \textbf{return} $\bar\theta_b^t$ \Comment{ship merged parameters}
    \Else
    \State \textbf{return} $\dot\theta_b^t$ \Comment{negotiation fails}
    \EndIf
\Else
\State \textbf{return} $\dot\theta_b^t$ \Comment{agent $B$ decides not to buy}
\EndIf
\end{algorithmic}
\end{algorithm}

The proof for Theorem \ref{thm:bound main} is in the Appendix \ref{app:bound}. We summarize trading steps in a single round for this instantiation above (Algorithm \ref{alg:trade}), where two agents are training and trading parameters for the linear model setting. Here, agents $A, B$ act as a seller and a buyer, respectively. 
\section{Convergence Analysis}
\label{sec: converge}


Next, we study a theoretical analysis for the \emph{effectiveness of buying parameters}.
In particular, we are interested in understanding whether participating in the market leads to faster training convergence (in the worst-case scenario).
We show this holds in a simplified setting where agent $A$ always leads and never purchases parameters, while agent $B$ always purchases from agent $A$.
This asymmetry could be due to various reasons including a lack of training resources for agent $B$.
Here we study a setting for general L-smooth functions, which is more practical, as the broker doesn’t need to be knowledgeable about the true parameter $\theta^*$.
We assume the broker's loss is lower than agents' losses, in particular, $\hat{\cal{L}}_z(\theta) \le \hat{\cal{L}}_a(\theta) \, \text{and} \, \hat{\cal{L}}_z(\theta) \le \hat{\cal{L}}_b(\theta) , \forall \theta \in \mathbb{R}^d$.
We take the gain-from-trade $\Delta^t_u$ by using the subtraction of empirical loss before and after a trade.
We write it as 
\begin{align}
    \Delta_u^t = \hat{\mathcal{L}}_z(\dot\theta_u^t) - \hat{\mathcal{L}}_z(\bar\theta_u^t), u \in \{a, b\}.
\end{align}

\begin{theorem}
For all agents $u \in \{a,b,z\}$,  let the loss function $\hat{\mathcal{L}}_u$ be $L-$smooth and let the samples on all agents be drawn from the same distribution $\mathcal{D}$. Let $\mathbf{E}_{\mathcal{D}}[\hat{\mathcal{L}}_u]= \mathcal{L}_u$, and $ \Delta_b^t = \hat{\mathcal{L}}_z(\dot\theta^t_b) - \hat{\mathcal{L}}_z(\bar\theta^t_b)$. Let the algorithm run until round $T$ with step size $\eta \in (0,\frac{1}{L})$, and let $\delta_b := \min_{t\in [T]} \mathbf{E}[\Delta^t_b]$ and $\bar{g}^2_b := \min_{t\in[T]} \mathbf{E}[\|\nabla \hat{\mathcal{L}}_b(\theta^t_b) \|^2_2]$. Then we have the following results,
\begin{enumerate}[leftmargin=16pt,label={\alph*)}, topsep=-5pt, itemsep=-5pt]
    \item (Always Trade) If $\Delta_b^{t}>0, \forall t$, and agent $B$ always buys (i.e. $I_b^t=1, \forall t$). Then $T \ge \frac{ 2(\mathcal{L}(\theta^0_b) - \mathcal{L}(\theta^*_b))}{\eta \epsilon^2 + 2\delta_b } $ implies $\bar{g}_b \le \epsilon$. 
    \item (Never Trade) If the agents never trade i.e. ($I_a^{t}= I_{b}^{t}=0, \forall t $). Then $\bar{g}_b \le \epsilon$, for $T \ge \frac{2(\mathcal{L}(\theta^0_b) - \mathcal{L}(\theta^*_b))}{\eta \epsilon^2 }.
  $ 
\end{enumerate}
\label{thm:conv_L_smooth}
\end{theorem}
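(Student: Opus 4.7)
The plan is to combine a standard $L$-smooth descent lemma for agent $B$'s local gradient step with the extra loss reduction provided by the merge, then telescope to a gradient-norm bound. First I would apply $L$-smoothness of $\hat{\mathcal{L}}_b$ to the local step $\dot\theta_b^t = \theta_b^{t-1} - \eta\nabla\hat{\mathcal{L}}_b(\theta_b^{t-1})$; because $\eta \in (0,1/L)$ the cross term absorbs the quadratic remainder and gives the familiar one-step bound
\[
\hat{\mathcal{L}}_b(\dot\theta_b^t) \leq \hat{\mathcal{L}}_b(\theta_b^{t-1}) - \tfrac{\eta}{2}\|\nabla\hat{\mathcal{L}}_b(\theta_b^{t-1})\|^2.
\]

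Next, because agent $B$ always trades, $\theta_b^t = \bar\theta_b^t$, and by the definition of gain-from-trade, $\hat{\mathcal{L}}_z(\bar\theta_b^t) = \hat{\mathcal{L}}_z(\dot\theta_b^t) - \Delta_b^t$. To fuse the two steps into a single descent in a common potential I would move to the population loss: since all agents sample from the same $\mathcal{D}$ with $\mathbf{E}_{\mathcal{D}}[\hat{\mathcal{L}}_u] = \mathcal{L}$ for every $u \in \{a,b,z\}$, taking expectations yields the per-round inequality
\[
\mathbf{E}[\mathcal{L}(\theta_b^t)] \leq \mathbf{E}[\mathcal{L}(\theta_b^{t-1})] - \tfrac{\eta}{2}\mathbf{E}[\|\nabla\hat{\mathcal{L}}_b(\theta_b^{t-1})\|^2] - \mathbf{E}[\Delta_b^t].
\]
The assumption $\hat{\mathcal{L}}_z \leq \hat{\mathcal{L}}_{a,b}$ enters here to guarantee the gain recorded by the broker's measurement is not optimistic relative to the agents' own populations, so that $\mathbf{E}[\Delta_b^t]$ genuinely certifies descent of the Lyapunov $\mathcal{L}$.

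Now I would telescope from $t=1$ to $T$, use $\mathcal{L}(\theta_b^T) \geq \mathcal{L}(\theta_b^*)$, and lower-bound each expected gradient-norm-squared term by $\bar g_b^2$ and each expected gain by $\delta_b$. This yields $T\bigl(\tfrac{\eta}{2}\bar g_b^2 + \delta_b\bigr) \leq \mathcal{L}(\theta_b^0) - \mathcal{L}(\theta_b^*)$; solving for the smallest $T$ that forces $\bar g_b \leq \epsilon$ produces the claimed threshold $T \geq \tfrac{2(\mathcal{L}(\theta_b^0)-\mathcal{L}(\theta_b^*))}{\eta\epsilon^2 + 2\delta_b}$ for part (a). Part (b) is the same derivation with the merge term absent (equivalently $\delta_b = 0$, since $I_b^t = 0$ kills the contribution of $\Delta_b^t$), recovering the textbook nonconvex GD rate.

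The main obstacle I anticipate is the bookkeeping across the three loss functions: descent is produced by $\hat{\mathcal{L}}_b$, the gain-from-trade is certified on $\hat{\mathcal{L}}_z$, and the only natural global potential is the population $\mathcal{L}$. Making the transfers between these clean is what the common-distribution assumption together with $\hat{\mathcal{L}}_z \leq \hat{\mathcal{L}}_{a,b}$ are designed to buy, and I would need to be careful that the random iterates' dependence on $D_b$ and on the broker's chosen $\beta$ does not break the identity $\mathbf{E}[\hat{\mathcal{L}}_b(\theta_b^{t-1})] = \mathbf{E}[\mathcal{L}(\theta_b^{t-1})]$ used at each step; this is the one place where the argument is more than a direct mimic of vanilla GD analysis.
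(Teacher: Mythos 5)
Your proposal matches the paper's proof essentially step for step: the $L$-smooth descent lemma for the local gradient step, substitution of $\mathbf{E}[\Delta_b^t] = \mathcal{L}(\dot\theta_b^t) - \mathcal{L}(\bar\theta_b^t)$ (justified by all agents sharing the distribution $\mathcal{D}$, so every $\hat{\mathcal{L}}_u$ has the same expectation $\mathcal{L}$), telescoping with $\theta_b^t = \bar\theta_b^t$, and solving for $T$; the never-trade case is the standard nonconvex GD rate, exactly as in the paper's Proposition. The one place you diverge slightly is attributing a role to $\hat{\mathcal{L}}_z \le \hat{\mathcal{L}}_b$ in the argument, whereas the paper's proof never invokes that assumption and relies only on the equal-expectations identity; your closing caveat about iterate dependence on $D_b$ is a real subtlety that the paper also leaves implicit.
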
 




\paragraph{Discussion.}  
We show the convergence rate of always trade for agent $B$ is $\mathcal{O}\big (1/(\epsilon^2 + \delta_b)\big)$, while never trade is $\mathcal{O}(1/\epsilon^2)$. The difference between these two scenarios is due to $\delta_b$--- the minimal gain-from-trade over $T$ runs. More gain-from-trade implies a smaller $T$ (i.e. faster convergence) in the worst-case scenario. In addition, when $\delta_b$ is $\Omega(\epsilon)$, we can get much better convergence rate of $\mathcal{O}(1 / \epsilon)$. Our results in this fundamental setting illustrate that participation in the market can lead to faster convergence when there exists gain-from-trade. The proof for Theorem \ref{thm:conv_L_smooth} is in the Appendix \ref{app:general}. In addition to this general setting, we provide convergence analysis for the linear model that we used as a case study in Sec. \ref{sec: init}. This analysis also shows a better convergence rate when agents trade parameters. See Appendix \ref{app:linear} for more details.
\vspace{-5pt}
\section{Experiments} 
We study the proposed framework empirically. Our goals are to validate (i) trading in the proposed framework results in improvements, (ii) these improvements persist even when trading subsets of parameters, (iii) these persist even when agents are trading different models for different tasks, (iv) trading and pricing are viable in competitive settings, and (v) understand the importance of key components of our framework, such as the need for alignment. 

\subsection{Collaborative Agents}
\label{sec: collab}

We first conduct experiments in a collaborative setting, where there is no payment for buying parameters. Our goal is to validate the potential improvements from transactions independently of the pricing mechanism, which we explore in a later set of experiments. 


\begin{table}[t!]\small
\centering
\scalebox{0.9}{
\begin{tabular}{@{}cccc@{}}
\toprule
 &  & Agent $A$ & Agent $B$ \\ \cmidrule(l){3-4} 
 &  & Testing Acc. (\%) & Testing Acc. (\%) \\ \midrule
\multirow{3}{*}{\begin{tabular}[c]{@{}c@{}}MNIST + \\ MLP\end{tabular}} & out-of-market & 68.50\% & 72.97\% \\
 & FedAvg & {\cellcolor{blue!5}81.98\%} & {\cellcolor{blue!5}81.98\%} \\
 & w/o alignment & {\cellcolor{blue!13}84.64\%} & {\cellcolor{blue!13}84.64\%} \\
 & w alignment & {\cellcolor{blue!25}\textbf{86.96\%}} & {\cellcolor{blue!25}\textbf{86.96\%}} \\ \midrule
\multirow{3}{*}{\begin{tabular}[c]{@{}c@{}}CIFAR10 + \\ ResNet20\end{tabular}} & out-of-market & 71.14\% & 70.56\% \\
 & FedAvg & {\cellcolor{red!5}70.35\%} & {\cellcolor{red!5}67.85\%} \\
 & w/o alignment & {\cellcolor{blue!13}78.31\%} & {\cellcolor{blue!13}78.31\%} \\
 & w alignment & {\cellcolor{blue!25}\textbf{79.90\%}} & {\cellcolor{blue!25}\textbf{79.90\%}} \\ \midrule
\multirow{3}{*}{\begin{tabular}[c]{@{}c@{}}TinyImageNet + \\ ResNet20\end{tabular}} & out-of-market & 21.67\% & 15.89\% \\
 & FedAvg & {\cellcolor{red!5}19.95\%} & {\cellcolor{blue!5}19.33\%} \\
 & w/o alignment & {\cellcolor{blue!15}31.28\%} & {\cellcolor{blue!15}31.30\%} \\
 & w alignment & {\cellcolor{blue!25}\textbf{31.69\%}} & {\cellcolor{blue!25}\textbf{31.82\%}} \\ \bottomrule
\end{tabular}}
\medskip
\centering\caption{Testing accuracies are reported for each combination of dataset and model. In \textit{FedAvg}, there is no broker to assist agents in conducting transactions. The interpolated weight is determined solely based on the proportion of data assets. \textit{w/o alignment} indicates that broker merges parameters via simple interpolation with the optimized purchased weight. In \textit{w alignment}, the broker aligns parameters by applying \cite{ainsworth2022git} and then interpolates.}
\label{tab:full param trading table}
\end{table}

\subsubsection{Parameter Trading in Neural Networks}

\paragraph{Setup.} 
\begin{wrapfigure}{r}{0.32\linewidth}
    \vspace{-40pt}
    \begin{center}
    \includegraphics[width=\linewidth]{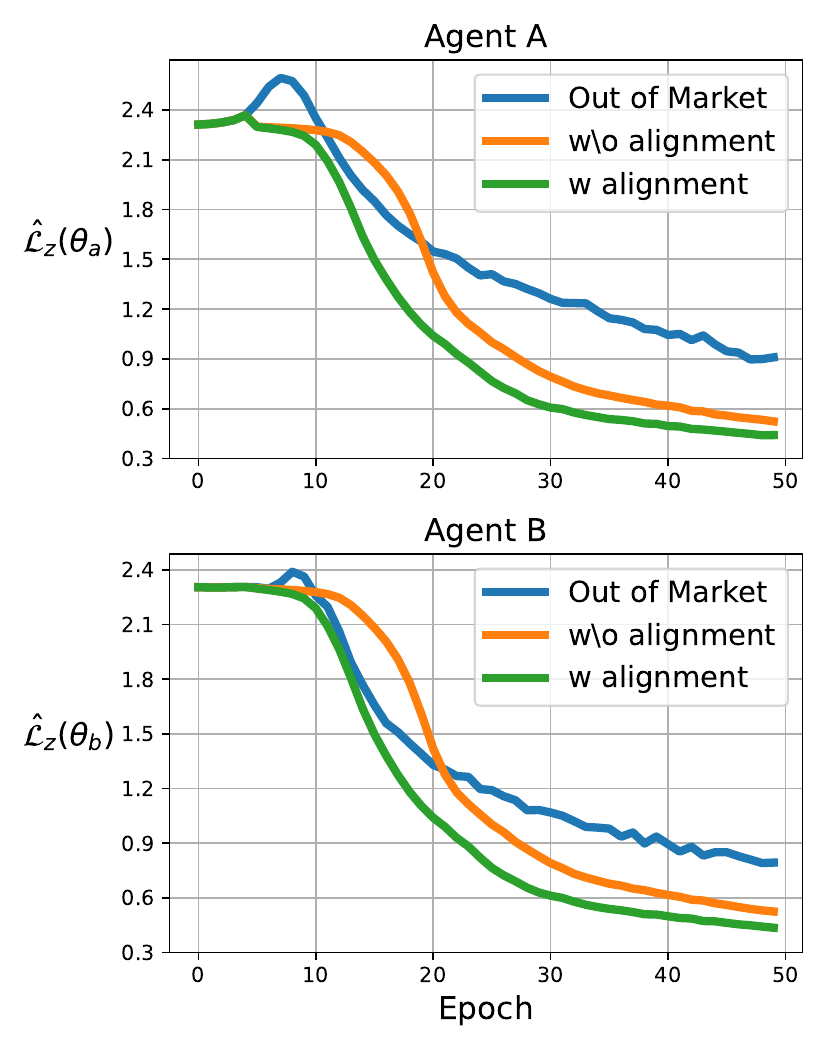}
    \end{center}
    \caption{{Testing loss converges the fastest by aligning and interpolating.}}
    \label{fig:MLP + MNIST}
    \vspace{-12pt}
\end{wrapfigure}
We use MNIST \cite{lecun1998gradient}, CIFAR10 \cite{Krizhevsky09learningmultiple}, and TinyImageNet \cite{le2015tiny} for training MLPs and ResNet20 \cite{he2016deep}. Agents have imbalanced datasets where half of the classes contain only 10\% of datapoints. Agents are limited to collecting a part of a dataset, making it difficult for them to achieve satisfactory performance without collaborating and trading parameters to leverage each other's strengths. 

Models are trained from different random initializations and batch orders over 60 epochs. Agents trade entire parameter sets and join the market after five epochs. The broker discloses gain-from-trade to agents. 
Broker aligns parameters \cite{ainsworth2022git}, then merge. 

In addition to the approach that agents train models on their own, we include another baseline method, FedAvg \cite{mcmahan2017communication}, which assumes that there is no broker involved in a trade to help agents align parameters and optimize their purchased weights. 
In FedAvg, the interpolated weight is determined by the portion of data assets that an agent is endowed with, which is 0.5 in this setting.

\paragraph{Results.}
Table \ref{tab:full param trading table} shows the performance of two agents. 
We find that both agents are able to achieve improved performance by leveraging each other's training expertise, compared to out-of-market agents. 
Specifically, training and trading ResNet20 with TinyImageNet resulted in improving accuracy by \textbf{+10.02\%} and \textbf{+15.93\%}, respectively.
We measure two ways to merge parameters for buying: with and without model alignment. 
With model alignment, the broker is able to merge models more effectively, ultimately enhancing performance for both agents.
In addition, compared to FedAvg method, results confirm the significance of having a trusted broker in parameter trading. Without an intermediary broker to facilitate the trade, the performance of purchased weights can be negatively impacted, as evidenced by the results of CIFAR10 + ResNet20 and TinyImageNet + ResNet20.

Finally, Figure \ref{fig:MLP + MNIST} displays a comparison of testing loss for the MLP on MNIST. Our results demonstrate that trading parameters results in faster convergence compared to siloed training. Using alignment further helps, as expected. 

\subsubsection{Parameter Subsets Trading in Neural Networks}
\label{sec: subset}

\paragraph{Setup.} Next, we explore the potential benefits of trading \emph{subsets} of parameters. This scenario may take place when agents are interested in only certain components or are restricted by trading budgets. We use the same data endowment as in the preceding configuration. We train two 5-layer MLPs on MNIST and align parameters in each layer to trade.
 
\paragraph{Results.} Table \ref{tab:partial param trading} displays the results by trading parameters from different layers. As expected, trading the entire model gives optimal performance (the last row). Trading subsets is helpful but suboptimal. We observe that purchasing parameters from shallow layers, close to the input layers, offers more benefits than trading with deeper layers. This provides trading guidance for budget-limited agents.

\subsubsection{The Effectiveness of Buying Parameters in Controlled Settings}
\paragraph{Setup.} We use a synthetic dataset to study trading in a fully-controlled environment. We use two linear regression models with dimension $d = 1000$. Agent $A$ has $n_a = 500$ datapoints in dataset $(X_a, Y_a)$ and $B$ has $n_b = 800$ datapoints in dataset $(X_b, Y_b)$, but the latter's labels are affected by zero-mean Gaussian noise ($\sigma^2 = 0.5$). We assume that the broker knows the true parameter $\theta^*$ and obtains $(X_z, Y_z)$ and has access to $n_z = 10,000$ datapoints.
%
Both agents start learning function $f_a, f_b$ with the same initialized weight $\theta^0$. We compare the results over 100 runs with agents who obtain the same data endowment but do not participate in the market. 


\begin{table}[t!]\small
\centering
\scalebox{0.9}{
\begin{tabular}{@{}ccc@{}}
\toprule
 & Agent $A$ & Agent $B$ \\ \cmidrule(l){2-3} 
 & Testing Acc. (\%) & Testing Acc. (\%) \\ \midrule
out-of-market & 71.29\% & 72.54\% \\ 
layers \{3, 4\} & {\cellcolor{red!10}66.28\%} & {\cellcolor{red!5}71.98\%} \\
layers \{2, 3, 4\} & {\cellcolor{red!5}70.73\%} & {\cellcolor{blue!6}73.36\%} \\
layers \{0, 1, 2\} & {\cellcolor{blue!12}74.76\%} & {\cellcolor{blue!12}74.16\%} \\
layers \{0, 1\} & {\cellcolor{blue!18}78.86\%} & {\cellcolor{blue!18}79.82\%} \\
layers \{0, 1, 2, 3, 4\} & {\cellcolor{blue!24}\textbf{86.96\%}}& {\cellcolor{blue!24}\textbf{86.96\%}} \\ \bottomrule
\end{tabular}
}
\medskip
\centering\caption{Testing accuracies for trading parameters from different layers in 5-layer MLPs on MNIST.}
\medskip
\label{tab:partial param trading}
\end{table}

\begin{figure}[t!]
    \begin{center}
    \includegraphics[width=\linewidth]{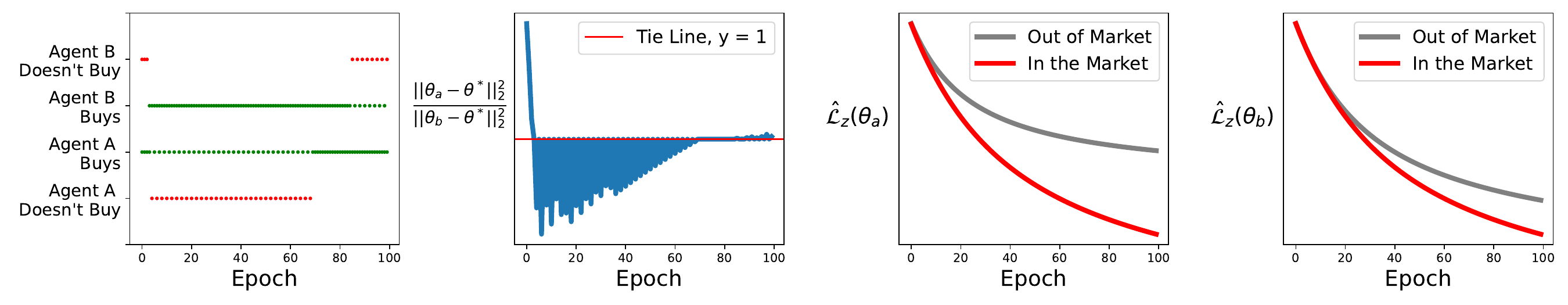}
    \end{center}
    \vspace{-8pt}
    \centering\caption{\textbf{Both agents earn improvements mutually through trading in the market}. We visualize trading results of two linear regression models with a synthesized dataset. Leftmost: trading logs over 100 runs. Second from left: The ratio of squared parameter estimation error between agent $A$ and agent $B$. A red line represents tied performance. Above the red line, agent $B$ leads, and vice versa. Second from right \& rightmost: agent $A$'s and agent $B$'s learning curve compared to out-of-market agents. Market usage thus produces performance improvements.}
    \label{fig:toy all in}
\end{figure}

\paragraph{Results.} The leftmost Figure \ref{fig:toy all in} displays the trading log over 100 runs. Green dots and red dots show whether an agent purchases the other's parameters in a specific run. Next, we show the ratio of squared parameter estimation error between agents. If the ratio is larger than 1 (red dashed line), agent $B$ leads the market. We see that agent $A$ leads the market in the first half of runs, making agent $B$ continue buying parameters from agent $A$. At the end of a few runs, agent $B$ turns to the lead. 

The rightmost plots show convergence. If an agent is involved in the market and trades with the other, the convergence rate is faster compared to never trading. This study demonstrates agents' behaviors in the trading runs and validates the effectiveness of buying parameters, leading to more efficient model training. 
We compute the empirical testing loss at the end. Compared to out-of-market agents, we find that agent $A$ and agent $B$ are able to improve testing loss by \textbf{42.84\%} and \textbf{23.88\%}, respectively. 

\subsubsection{Trading Parameters of Models with Different Purposes}

\begin{wrapfigure}{r}{0.32\linewidth}
  \begin{center}
\includegraphics[width=\textwidth]{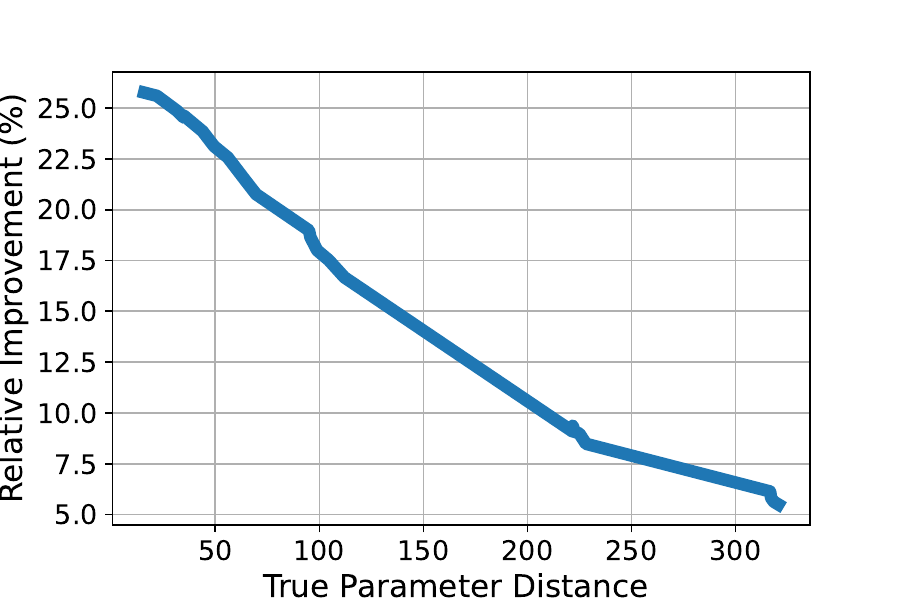}
  \end{center}
  \caption{Trading with parameters that are from related tasks is possible.}
  \label{fig:related task}
  \vspace{-5pt}
\end{wrapfigure}

\paragraph{Setup.} Next, we validate whether trading makes sense even if agents are training \emph{different models for different purposes}. 
This scenario is more realistic as it simulates situations where organizations in the market are working on different but potentially related tasks. We model this setting by sweeping the distance between the true parameters $\theta^*_a, \theta^*_b$ of two linear regression models to observe how it impacts the benefits of trading.

\paragraph{Results.} We record the benefits from trading when compared to an agent who does not participate but has the same data as agent $A$. We measure the relative improvement in empirical testing loss. The results are shown in Figure \ref{fig:related task}, which indicates that even though the two agents are not training on the same task, agent $A$ is still able to benefit from trading. Note that the gain exists even when the tasks are quite different (i.e. large distance $\|\theta^*_a - \theta^*_b\|_2$) but is strongest as the tasks are most closely related.


\subsection{Competitive Agents}

\begin{wrapfigure}{r}{0.32\linewidth}
    \vspace{-20pt}
    \begin{center}
    \includegraphics[width=\linewidth]{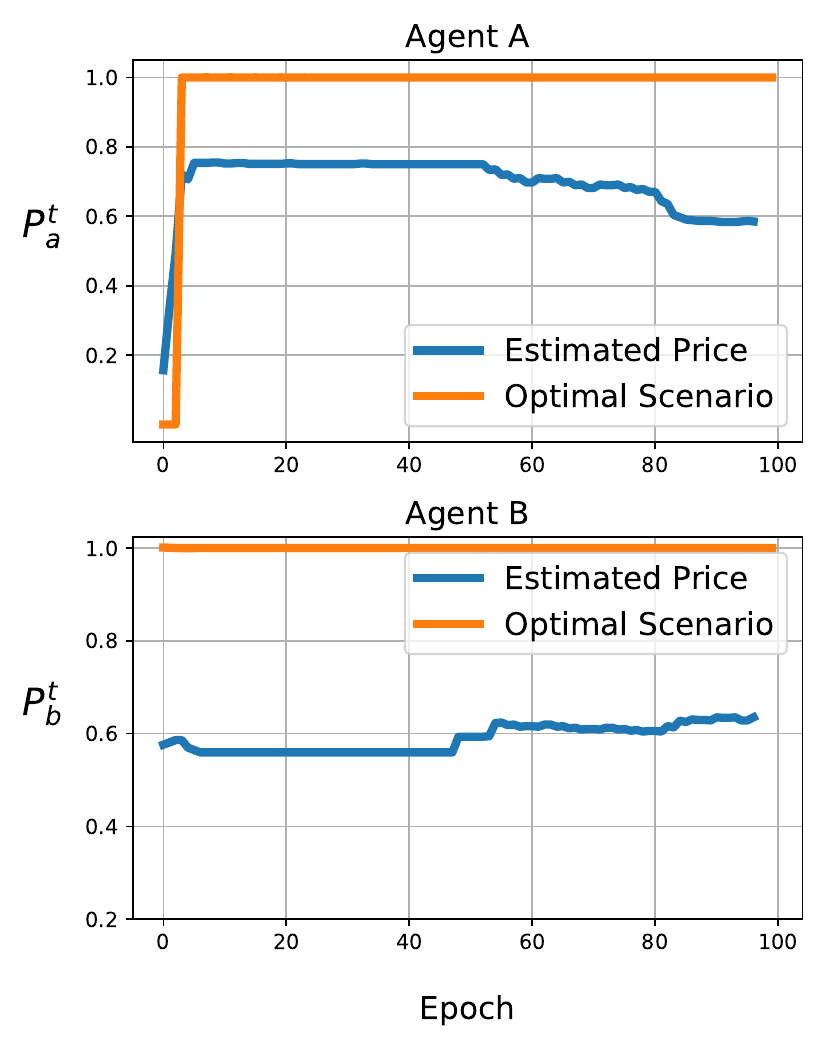}
    \end{center}
    \caption{We visualize parameter market price negotiated by broker. In the first half of runs, agent $A$'s performance dominates the market, making agent $A$'s parameter more valuable compared to opponent agent $B$. Note that, if there is no trade, market price remains the same as historical price.}
    \vspace{-8pt}
    \label{fig:pricing}
\end{wrapfigure}

Finally, we study agents engaging in a competitive scenario. In this case, the transactions require pricing. We validate the proposed pricing mechanism. 


\vspace{-5pt}
\subsubsection{The Effectiveness of Bayesian Optimal Pricing Mechanism}

\paragraph{Setup.} We reuse the synthesized dataset from the collaborative experiment. We set the buyer's valuation to gain-from-trade and estimate the seller's virtual valuation by the lower bound that we can find.

\vspace{-3pt}
\paragraph{Results.} Market prices negotiated by the broker over 100 trading runs are displayed in Figure \ref{fig:pricing}. Based on Eq. \ref{eq:price difference}, the market price is determined by the average of the buyer's valuation and the seller's virtual valuation. To demonstrate market efficiency, we also study the scenario where the seller sets the valuation to be exactly the same as the buyer's. As shown in Figure \ref{fig:toy all in} (second from left: performance ratio), agent $A$ is initially leading in the first half of the runs, resulting in a higher price for their parameters. However, at the end of a few runs, agent $B$ takes the lead, causing their parameter price to increase while opponent $A$' goes down. It is important to note that the gap between the estimated price (the blue line) and the optimal price (the orange line) can be reduced with more information learned through the market, such as via historical transactions. Besides, the resulting negotiated price creates a discrepancy with the price in the optimal scenario where both parties report their valuations truthfully, \textit{highlighting the significance of revealing accurate parameter values and justifying the need for incentives.} At last, this study illustrates the feasibility of using the Bayesian optimal pricing mechanism to assist seller agents in monetizing parameters with limited information to make a trade.

\section{Conclusion}
In this paper, we introduced a framework for parameter markets that can  serve to reduce the heavy costs of large-scale model training. 
Borrowing from economic principles, we provided a set of mechanisms that enable the functioning of such markets.
Theoretically, for simple settings, we analyzed market efficiency and proved that agents can gain from participating.
We empirically validated the effectiveness of parameter markets under collaborative and competitive settings and demonstrated when participants in the market earn mutual benefits through market usage. 

\section{Acknowledgments}
We would like to express our gratitude to the Wisconsin Alumni Research Foundation (WARF) for supporting this work. Additionally, we would like to thank all the reviewers for their valuable comments and constructive feedback on our work.

\bibliography{reference}
\bibliographystyle{achemso}

\appendix
\newpage

The appendix is organized as follows. 
In Appendix \ref{app:price}, we show steps for broker to solve revenue maximization problem in a trade (Sec. \ref{sec: pricing mech}). 
Next, we provide details for the instantiation in Sec. \ref{sec: init}. First, in Appendix \ref{app:perf ratio}, we offer proof for agents to bound performance ratio under different trading decisions. 
Then, in Appendix \ref{app:bound}, we provide proof of Theorem \ref{thm:bound main} for agents to bound the other agent's gain-from-trade.
We generalize instantiation to a broader setting for L-smooth function and study its convergence analysis in Appendix \ref{app:general}.
In addition to L-smooth function, we use linear models as a case study and provide its convergence analysis in Appendix \ref{app:linear}.
In Appendix \ref{app:exp}, we place experimental details regarding data endowments and training settings.
Then, in Appendix \ref{app:more exp}, we offer more findings as trading guidance investigated by different trading scenarios.
At last, we discuss limitations, potential concerns, computational needs for broker, and agents' common priors in Appendix \ref{app:limitations}. 
%

\section{Pricing Mechanism}
\label{app:price}

In this section, we offer steps to maximize revenue of both agents. Recall that we define agent $u$'s valuation as $v_u(\theta)$ and the market price of agent $u$'s parameters at time $t$ as $P^t_{u}$. We set agents' revenue as follows,
\begin{align*}
U_a(P_a^t, P_b^t) := \big ( P^t_a - v_a(\dot\theta_a^t) \big ) +  \big ( v_a(\dot\theta_b^t) - P^t_b \big )  \quad 
U_b(P_b^t, P_a^t) := \big ( P^t_b - v_b(\dot\theta_b^t) \big ) + \big  ( v_b(\dot\theta_a^t) - P^t_a \big )
\end{align*}

We formulate the revenue maximization problem accordingly,
\begin{align*}
& \argmax_{P^t_a, P^t_b} \quad U_a(P_a^t, P_b^t) \times U_b(P_b^t, P_a^t) \\
& \textrm{s.t.} \qquad P^t_{a} \in \big [ v_a(\dot\theta_a^t), v_b(\dot\theta_a^t) \big ], \quad P^t_{b} \in \big [ v_b(\dot\theta_b^t), v_b(\dot\theta_a^t) \big ]
\end{align*}

Let the difference in price that broker finds as $\Delta P^{t}_{ab} := P^t_a - P^t_b$. Then we can have,
\begin{align*}
& \argmax_{P^t_a, P^t_b} \Big [ \big ( P^t_a - v_a(\dot\theta_a^t) \big ) +  \big ( v_a(\dot\theta_b^t) - P^t_b \big ) \Big ] \times \Big [ \big ( P^t_b - v_b(\dot\theta_b^t) \big ) + \big  ( v_b(\dot\theta_a^t) - P^t_a \big ) \Big ] \\
& \Rightarrow \argmax_{P^t_a, P^t_b} \big ( \Delta P^{t}_{ab} - v_a(\dot\theta_a^t) + v_a(\dot\theta_b^t) \big ) \times \big ( -\Delta P^{t}_{ab} - v_b(\dot\theta_b^t) + v_b(\dot\theta_a^t) \big ) \\
& = \argmax_{P^t_a, P^t_b} \big ( -(\Delta P^{t}_{ab})^2 - v_b(\dot\theta_b^t) \cdot \Delta P^{t}_{ab} + v_b(\dot\theta_a^t) \cdot \Delta P^{t}_{ab} + v_a(\dot\theta_a^t) \cdot \Delta P^{t}_{ab} + v_a(\dot\theta_a^t) \cdot v_b(\dot\theta_b^t) \\
& - v_a(\dot\theta_a^t) \cdot v_b(\dot\theta_a^t) - v_a(\dot\theta_b^t) \cdot \Delta P^{t}_{ab} - v_a(\dot\theta_b^t) \cdot v_b(\dot\theta_b^t) + v_a(\dot\theta_b^t) \cdot v_b(\dot\theta_a^t) \big ) \\
\end{align*}
Taking the derivative and set equation to zero,
\begin{align*}
-2 \Delta P^{t}_{ab} - v_b(\dot\theta_b^t) + v_b(\dot\theta_a^t) + v_a(\dot\theta_a^t)  - v_a(\dot\theta_b^t) = 0
\end{align*}

Then we can find that the $\Delta P^{t}_{ab}$ by
\begin{align*}
\Delta P^{t}_{ab} = P^t_a - P^t_b = \frac{1}{2}\Big (  v_b(\dot\theta_a^t) + v_a(\dot\theta_a^t)  - v_a(\dot\theta_b^t) - v_b(\dot\theta_b^t)  \Big )
\end{align*}

The resulting price difference represents the transferred payment between agents.
%

\section{Performance Ratio}
\label{app:perf ratio}

We provide a concrete example of how agents value parameters in the linear model in Sec. \ref{sec: init}. To approach Theorem \ref{thm:bound main}, we start by finding bounds on the performance ratio between agents under different trading decisions. Here, the performance ratio in the market at time $t$ is determined by the following, and we use agent $A$ as an example to bound the ratio between agent $A$ and his opponent.
\[
    \frac{\|\theta_b^t - \theta^*\|_2^2}{\|\theta_a^t - \theta^*\|_2^2}
\]

Once the ratio is greater than 1, agent $A$ is the lead in the market. Note that $\theta_a^t$ and $\theta_b^t$ are the final parameters based on agents' decisions, and $\theta^*$ is the true parameter known by broker. In a two-agent market, There are four different scenarios to analyze. 

\begin{align*}
    \{\theta_a^t, \theta_b^t\} = 
    \begin{cases} 
    \dot\theta_a^t, \dot\theta_b^t \quad \text{if agent $A$ doesn't buy and doesn't sell parameters, } \\
    \bar\theta_a^t, \dot\theta_b^t \quad \text{if agent $A$ buys but doesn't sell parameters, } \\
    \dot\theta_a^t, \bar\theta_b^t \quad \text{if agent $A$ doesn't buy but sell parameters, } \\
    \bar\theta_a^t, \bar\theta_b^t \quad \text{if agent $A$ buys and sells parameters.} \\
    \end{cases}  
\end{align*}

Recall that the purchased parameters are defined as
\begin{align*}
    \bar\theta_a^t := (1 - \alpha) \dot\theta_a^t + \alpha \dot\theta_b^t, \quad \alpha \in (0, 1] \\
    \bar\theta_b^t := (1 - \beta) \dot\theta_b^t + \beta \dot\theta_a^t, \quad \beta \in (0, 1]
\end{align*}

In the instantiation, recall that we take the gain-from-trade revealed by broker for agent $A$ to be
\begin{align*}
    \Delta^t_a := \frac{\|\dot\theta_a^t - \theta^*\|_2^2}{\|\bar\theta_a^t - \theta^*\|_2^2}
\end{align*}

\begin{lemma}
\label{lem:nb-ns, appendix}
\textbf{Agent A: (No-Buy, No-Sell)} If agent $A$ doesn't buy parameters from agent $B$ and doesn't sell parameters to agent $B$, then knowing the purchased weight $\alpha$ and gain-from-trade $\Delta^t_a$, the ratio of parameter estimation error between agent $A$ and agent $B$ is bounded by:
\[
    (\frac{1}{\alpha\sqrt{\Delta^t_a}} - \frac{1 - \alpha}{\alpha})^2 \le \frac{\|\dot\theta_b^t - \theta^*\|_2^2}{\|\dot\theta_a^t - \theta^*\|_2^2} \le (\frac{1}{\alpha\sqrt{\Delta^t_a}} + \frac{1 - \alpha}{\alpha})^2 \\
\]
\end{lemma}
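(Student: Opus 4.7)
The plan is to reduce everything to the triangle inequality on the norms of the error vectors $\dot\theta_a^t-\theta^*$, $\dot\theta_b^t-\theta^*$, and $\bar\theta_a^t-\theta^*$, then use the definition of the gain-from-trade $\Delta^t_a$ to eliminate the combined error in favor of the solo error.

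First I would denote $e_a := \|\dot\theta_a^t - \theta^*\|_2$, $e_b := \|\dot\theta_b^t - \theta^*\|_2$, and $\bar{e}_a := \|\bar\theta_a^t - \theta^*\|_2$. The convex-combination identity
\[
\bar\theta_a^t - \theta^* = (1-\alpha)(\dot\theta_a^t - \theta^*) + \alpha(\dot\theta_b^t - \theta^*)
\]
gives, by the triangle inequality and its reverse form,
\[
\bigl|\alpha e_b - (1-\alpha) e_a\bigr| \;\le\; \bar{e}_a \;\le\; (1-\alpha) e_a + \alpha e_b.
\]
From the definition $\Delta^t_a = e_a^2 / \bar{e}_a^2$, I substitute $\bar{e}_a = e_a/\sqrt{\Delta^t_a}$ into both inequalities and divide through by $\alpha e_a$ to obtain a two-sided bound on the ratio $e_b/e_a$:
\[
\frac{1}{\alpha\sqrt{\Delta^t_a}} - \frac{1-\alpha}{\alpha} \;\le\; \frac{e_b}{e_a} \;\le\; \frac{1}{\alpha\sqrt{\Delta^t_a}} + \frac{1-\alpha}{\alpha}.
\]
Squaring both sides (and noting that the final statement of the lemma squares the ratio as well) yields exactly the claimed bounds on $\|\dot\theta_b^t-\theta^*\|_2^2/\|\dot\theta_a^t-\theta^*\|_2^2$.

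The only subtlety I expect is the squaring step: the lower bound $1/(\alpha\sqrt{\Delta^t_a}) - (1-\alpha)/\alpha$ may be negative, in which case squaring flips its interpretation. When it is negative, the genuine lower bound on the squared ratio is $0$, while the expression on the left-hand side of the lemma is simply a valid (but not tight) lower bound after squaring only when the quantity is nonnegative; I would note this implicitly by working with the absolute value from the reverse triangle inequality and observing that in the regime of interest (where agent $A$ is informative, i.e. $\Delta^t_a$ is not too large relative to $\alpha$), the bound is meaningful. No further machinery beyond the triangle inequality is required.
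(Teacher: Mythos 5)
Your upper bound is correct and is essentially the paper's argument in different clothing: the paper solves the convex combination for $\dot\theta_b^t$, expands the squared norm of $\frac{1}{\alpha}(\bar\theta_a^t-\theta^*) - \frac{1-\alpha}{\alpha}(\dot\theta_a^t-\theta^*)$, and bounds the cross term by Cauchy--Schwarz; after substituting $\bar e_a = e_a/\sqrt{\Delta_a^t}$ this is the same inequality your triangle-inequality step produces.

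The genuine gap is in the lower bound. Writing $c := \frac{1}{\alpha\sqrt{\Delta_a^t}} - \frac{1-\alpha}{\alpha}$, your forward triangle inequality applied to $\bar\theta_a^t-\theta^*$ yields only $e_b/e_a \ge c$, which gives nothing usable after squaring when $c<0$. You flag this, but your resolution is wrong: you claim the genuine lower bound degenerates to $0$ in that regime, whereas the lemma's bound $(e_b/e_a)^2 \ge c^2$ in fact holds \emph{unconditionally}. To recover it, apply the reverse triangle inequality to the \emph{inverted} decomposition $\dot\theta_b^t - \theta^* = \frac{1}{\alpha}(\bar\theta_a^t - \theta^*) - \frac{1-\alpha}{\alpha}(\dot\theta_a^t - \theta^*)$, which gives $e_b \ge \bigl|\tfrac{1}{\alpha}\bar e_a - \tfrac{1-\alpha}{\alpha}e_a\bigr| = |c|\,e_a$ and hence $(e_b/e_a)^2 \ge c^2$ regardless of the sign of $c$. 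This is exactly what the paper's Cauchy--Schwarz computation produces: its right-hand side collapses to the perfect square $c^2 e_a^2$, so no case split is needed. Note the regime $c<0$, i.e.\ $\sqrt{\Delta_a^t} > 1/(1-\alpha)$, corresponds precisely to a large gain-from-trade, so it cannot be waved away as outside "the regime of interest."
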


\begin{lemma}
\label{lem:b-ns, appendix}
\textbf{Agent A: (Buy, No-Sell)} If agent $A$ buys parameters from agent $B$ and doesn't sell parameters to agent $B$, then knowing the purchased weight $\alpha$ and gain-from-trade $\Delta^t_a$, the ratio of parameter estimation error between agent $A$ and agent $B$ is bounded by:
\[
    \Delta_a^t(\frac{1}{\alpha\sqrt{\Delta^t_a}} - \frac{1 - \alpha}{\alpha})^2 \le \frac{\|\dot\theta_b^t - \theta^*\|_2^2}{\|\bar\theta_a^t - \theta^*\|_2^2} \le \Delta_a^t(\frac{1}{\alpha\sqrt{\Delta^t_a}} + \frac{1 - \alpha}{\alpha})^2 \\
\]
\end{lemma}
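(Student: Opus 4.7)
The plan is to reduce this Buy, No-Sell bound directly to the No-Buy, No-Sell bound from Lemma \ref{lem:nb-ns, appendix}. The key observation is that the only change from the previous lemma is in the denominator: we have replaced $\|\dot\theta_a^t-\theta^*\|_2^2$ by $\|\bar\theta_a^t-\theta^*\|_2^2$. But the gain-from-trade definition
\[
\Delta_a^t = \frac{\|\dot\theta_a^t-\theta^*\|_2^2}{\|\bar\theta_a^t-\theta^*\|_2^2}
\]
gives exactly the multiplicative conversion between these two denominators, since $\|\dot\theta_a^t-\theta^*\|_2^2 = \Delta_a^t \cdot \|\bar\theta_a^t-\theta^*\|_2^2$. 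So the first step is to write
\[
\frac{\|\dot\theta_b^t-\theta^*\|_2^2}{\|\bar\theta_a^t-\theta^*\|_2^2}
= \Delta_a^t \cdot \frac{\|\dot\theta_b^t-\theta^*\|_2^2}{\|\dot\theta_a^t-\theta^*\|_2^2},
\]
then apply Lemma \ref{lem:nb-ns, appendix} to the rightmost factor, and multiply through by $\Delta_a^t$. A quick algebraic check confirms the stated bounds: both sides of Lemma \ref{lem:nb-ns, appendix} already have the form $\bigl(\tfrac{1}{\alpha\sqrt{\Delta_a^t}} \pm \tfrac{1-\alpha}{\alpha}\bigr)^2$, so multiplying by $\Delta_a^t$ produces exactly the target expressions.

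For completeness, the direct derivation (not relying on Lemma \ref{lem:nb-ns, appendix} as a black box) would proceed as follows. Starting from $\bar\theta_a^t = (1-\alpha)\dot\theta_a^t + \alpha\dot\theta_b^t$, subtract $\theta^*$ from both sides and rearrange to isolate $\dot\theta_b^t-\theta^*$:
\[
\alpha(\dot\theta_b^t-\theta^*) = (\bar\theta_a^t-\theta^*) - (1-\alpha)(\dot\theta_a^t-\theta^*).
\]
Apply the triangle and reverse triangle inequalities to get two-sided bounds on $\alpha\|\dot\theta_b^t-\theta^*\|_2$ in terms of $\|\bar\theta_a^t-\theta^*\|_2$ and $\|\dot\theta_a^t-\theta^*\|_2$. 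Then substitute $\|\dot\theta_a^t-\theta^*\|_2 = \sqrt{\Delta_a^t}\,\|\bar\theta_a^t-\theta^*\|_2$, divide through by $\|\bar\theta_a^t-\theta^*\|_2$, and square. Factoring $\Delta_a^t/\alpha^2$ out of each squared bound recovers exactly the stated expressions.

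There is essentially no obstacle here; the lemma is a direct consequence of the $\Delta_a^t$ definition combined with Lemma \ref{lem:nb-ns, appendix}. The only thing to be mildly careful about is the lower bound: the reverse triangle inequality produces an absolute value, and squaring removes the sign, so the stated lower bound is valid regardless of which of the two quantities inside the absolute value is larger. One should also note that this bound is only informative (nonnegative in a meaningful sense) when $1 - \sqrt{\Delta_a^t}(1-\alpha) \ge 0$ inside the analogous expression; otherwise the squared lower bound is simply a weaker (but still correct) inequality, which is acceptable because we only need it as a valid bound, not a tight one.
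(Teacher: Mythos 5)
Your proposal is correct and matches the paper's own argument: the paper likewise derives this bound from Lemma \ref{lem:nb-ns, appendix} by noting that dividing by $\|\bar\theta_a^t-\theta^*\|_2^2$ instead of $\|\dot\theta_a^t-\theta^*\|_2^2$ amounts to multiplying the No-Buy, No-Sell bounds by $\Delta_a^t$. Your supplementary direct derivation via the triangle inequality is also sound, though the paper does not need it.
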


\begin{proof}
\begin{align*}
    \|\dot\theta_b^t - \theta^*\|^2_2
    & = \|\frac{\bar\theta_a^t - (1 - \alpha) \dot\theta_a^t}{\alpha} - \theta^*\|^2_2 \\
    & = \|\frac{1}{\alpha} (\bar\theta_a^t - \theta^*) - \frac{1 - \alpha}{\alpha}(\dot\theta_a^t - \theta^*)\|^2_2 \\
    & = \frac{1}{\alpha^2}\|\bar\theta_a^t - \theta^*\|^2_2 + (\frac{1 - \alpha}{\alpha})^2 \|\dot\theta_a^t - \theta^* \|^2_2 - \frac{2(1 - \alpha)}{\alpha^2}\langle\bar\theta_a^t - \theta^*, \dot\theta_a^t - \theta^*\rangle \\
    & \le \frac{1}{\alpha^2}\|\bar\theta_a^t - \theta^*\|^2_2 + (\frac{1 - \alpha}{\alpha})^2 \|\dot\theta_a^t - \theta^* \|^2_2 + \frac{2(1 - \alpha)}{\alpha^2}\|\bar\theta_a^t - \theta^*\|_2\|\dot\theta_a^t - \theta^*\|_2 \\
    & = \frac{1}{\Delta_a^t\alpha^2}\|\dot\theta_a^t - \theta^*\|^2_2 + (\frac{1 - \alpha}{\alpha})^2 \|\dot\theta_a^t - \theta^*\|^2_2 + \frac{2(1 - \alpha)}{\sqrt{\Delta_a^t}\alpha^2}\|\dot\theta_a^t - \theta^*\|^2_2 \\
    & = (\frac{1}{\alpha\sqrt{\Delta_a^t}} + \frac{1-\alpha}{\alpha})^2\|\dot\theta_a^t - \theta^*\|^2_2
\end{align*}

Hence, we can have an upper bound
\begin{align*}
    \frac{\|\dot\theta_b^t - \theta^*\|_2^2}{\|\dot\theta_a^t - \theta^*\|_2^2} \le (\frac{1}{\alpha\sqrt{\Delta^t_a}} + \frac{1 - \alpha}{\alpha})^2
\end{align*}

For the lower bound, 
\begin{align*}
    \|\dot\theta_b^t - \theta^*\|^2_2
    & = \|\frac{\bar\theta_a^t - (1 - \alpha) \dot\theta_a^t}{\alpha} - \theta^*\|^2_2 \\
    & = \|\frac{1}{\alpha} (\bar\theta_a^t - \theta^*) - \frac{1 - \alpha}{\alpha}(\dot\theta_a^t - \theta^*)\|^2_2 \\
    & = \frac{1}{\alpha^2}\|\bar\theta_a^t - \theta^*\|^2_2 + (\frac{1 - \alpha}{\alpha})^2 \|\dot\theta_a^t - \theta^* \|^2_2 - \frac{2(1 - \alpha)}{\alpha^2}\langle\bar\theta_a^t - \theta^*, \dot\theta_a^t - \theta^*\rangle \\
    & \ge \frac{1}{\alpha^2}\|\bar\theta_a^t - \theta^*\|^2_2 + (\frac{1 - \alpha}{\alpha})^2 \|\dot\theta_a^t - \theta^* \|^2_2 - \frac{2(1 - \alpha)}{\alpha^2}\|\bar\theta_a^t - \theta^*\|_2\|\dot\theta_a^t - \theta^*\|_2 \\
    & = \frac{1}{\Delta_a^t\alpha^2}\|\dot\theta_a^t - \theta^*\|^2_2 + (\frac{1 - \alpha}{\alpha})^2 \|\dot\theta_a^t - \theta^*\|^2_2 - \frac{2(1 - \alpha)}{\sqrt{\Delta_a^t}\alpha^2}\|\dot\theta_a^t - \theta^*\|^2_2 \\
    & = (\frac{1}{\alpha\sqrt{\Delta_a^t}} - \frac{1-\alpha}{\alpha})^2\|\dot\theta_a^t - \theta^*\|^2_2
\end{align*}

Hence, we can have a lower bound satisfied by
\begin{align*}
    \frac{\|\dot\theta_b^t - \theta^*\|_2^2}{\|\dot\theta_a^t - \theta^*\|_2^2} \ge (\frac{1}{\alpha\sqrt{\Delta^t_a}} - \frac{1 - \alpha}{\alpha})^2
\end{align*}

Therefore, by knowing gain-from-trade $\Delta^t_a$ and purchased weight $\alpha$ from the broker, in \{No-Buy, No-Sell\} scenario, agent $A$ can bound the performance ratio with the final parameter $\dot\theta_a^t$ and $\dot\theta_b^t$. We can also have bounds for another scenario \{Buy, No-Sell\} by multiplying $\Delta^t_a$ to the inequality.
\end{proof}

Another case in the market is agent $B$ wishes to buy parameters from agent $A$. In this case, agent $A$ will receive a quotation request from agent $B$ with his purchased weight $\beta$. Then, agent $A$ is able to bound the performance ratio.

\begin{lemma}
\label{lem:nb-s, appendix}
\textbf{Agent A: (No-Buy, Sell)} If agent $A$ doesn't buy parameters from agent $B$ but sells parameters to agent $B$, then knowing purchased weights $\alpha, \beta$ and gain-from-trade $\Delta^t_a$, the ratio of parameter estimation error between agent $A$ and agent $B$ is bounded by:
\[
    \big [ (1 - \beta)(\frac{1}{\alpha\sqrt{\Delta_a^t}} - \frac{1-\alpha}{\alpha}) - \beta \big ]^2 \le \frac{\|\bar\theta_b^t - \theta^*\|_2^2}{\|\dot\theta_a^t - \theta^*\|_2^2} \le \big [ (1 - \beta)(\frac{1}{\alpha\sqrt{\Delta_a^t}} + \frac{1-\alpha}{\alpha}) + \beta \big ]^2 \\
\]
\end{lemma}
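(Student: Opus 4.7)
The plan is to reduce this to Lemma~\ref{lem:nb-ns, appendix}, which already supplies upper and lower bounds on $\|\dot\theta_b^t - \theta^*\|^2/\|\dot\theta_a^t - \theta^*\|^2$ from knowing $\Delta_a^t$ and $\alpha$. The starting move is the algebraic identity
\[
\bar\theta_b^t - \theta^* \;=\; (1-\beta)\,(\dot\theta_b^t - \theta^*) \;+\; \beta\,(\dot\theta_a^t - \theta^*),
\]
which rewrites the quantity of interest as a convex combination of the two individual error vectors. Since everything ultimately depends on norms, this decomposition is the right entry point.

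Expanding the squared norm gives
\[
\|\bar\theta_b^t - \theta^*\|_2^2 \;=\; (1-\beta)^2\|\dot\theta_b^t - \theta^*\|_2^2 \;+\; \beta^2\|\dot\theta_a^t - \theta^*\|_2^2 \;+\; 2\beta(1-\beta)\langle \dot\theta_b^t - \theta^*,\; \dot\theta_a^t - \theta^*\rangle.
\]
I would then bound the inner product above and below by $\pm\|\dot\theta_b^t - \theta^*\|_2\,\|\dot\theta_a^t - \theta^*\|_2$ via Cauchy–Schwarz, exactly as in the proof of Lemma~\ref{lem:b-ns, appendix}. This collapses the right-hand side into the clean square form
\[
\bigl[(1-\beta)\|\dot\theta_b^t - \theta^*\|_2 \;\pm\; \beta\,\|\dot\theta_a^t - \theta^*\|_2\bigr]^2,
\]
with $+$ giving an upper bound and $-$ giving a lower bound (the latter only when the bracketed quantity is nonnegative, so that squaring preserves direction).

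The final step is to plug in the bounds on $\|\dot\theta_b^t-\theta^*\|_2/\|\dot\theta_a^t-\theta^*\|_2$ from Lemma~\ref{lem:nb-ns, appendix}: the upper bound $\tfrac{1}{\alpha\sqrt{\Delta_a^t}} + \tfrac{1-\alpha}{\alpha}$ goes into the $+$ case, and the lower bound $\tfrac{1}{\alpha\sqrt{\Delta_a^t}} - \tfrac{1-\alpha}{\alpha}$ goes into the $-$ case. Dividing through by $\|\dot\theta_a^t-\theta^*\|_2^2$ yields precisely the bracketed expressions in the statement of the lemma.

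The main obstacle is the lower bound: substituting the lower bound on $\|\dot\theta_b^t-\theta^*\|_2$ into $(1-\beta)\|\dot\theta_b^t-\theta^*\|_2 - \beta\|\dot\theta_a^t-\theta^*\|_2$ only yields a valid lower bound on its square if the resulting quantity $(1-\beta)\bigl(\tfrac{1}{\alpha\sqrt{\Delta_a^t}} - \tfrac{1-\alpha}{\alpha}\bigr) - \beta$ is nonnegative—i.e.\ when agent $A$'s relative lead is large enough that even after agent $B$ mixes in a $\beta$-fraction of $A$'s parameters, the resulting error still exceeds $A$'s own. This is the implicit regime of the statement, and should be remarked on; otherwise one has to replace the lower bound by $\max(\,\cdot\,,0)^2$, which is just the informative version of the inequality.
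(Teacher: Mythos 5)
Your proposal follows essentially the same route as the paper's own proof: decompose $\bar\theta_b^t-\theta^*$ as $(1-\beta)(\dot\theta_b^t-\theta^*)+\beta(\dot\theta_a^t-\theta^*)$, expand the squared norm, control the cross term with Cauchy--Schwarz, and substitute the bounds from Lemma~\ref{lem:nb-ns, appendix} to collapse everything into the bracketed squares. Your closing caveat is well taken and in fact sharper than the paper itself: the paper's lower-bound step substitutes the \emph{lower} bound on $\|\dot\theta_b^t-\theta^*\|_2/\|\dot\theta_a^t-\theta^*\|_2$ into the negative cross term, which only yields a valid lower bound on the square when $(1-\beta)\bigl(\tfrac{1}{\alpha\sqrt{\Delta_a^t}}-\tfrac{1-\alpha}{\alpha}\bigr)-\beta\ge 0$, so the regime restriction (or the $\max(\cdot,0)^2$ fix) you flag is genuinely needed.
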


\begin{lemma}
\label{lem:b-s, appendix}
\textbf{Agent A: (Buy, Sell)} If agent $A$ buys parameters from agent $B$ and sells parameters to agent $B$, then knowing purchased weights $\alpha, \beta$ and gain-from-trade $\Delta^t_a$, the ratio of parameter estimation error between agent $A$ and agent $B$ is bounded by:
\[
    \Delta_a^t \big [ (1 - \beta)(\frac{1}{\alpha\sqrt{\Delta_a^t}} - \frac{1-\alpha}{\alpha}) - \beta \big ]^2 \le \frac{\|\bar\theta_b^t - \theta^*\|_2^2}{\|\bar\theta_a^t - \theta^*\|_2^2} \le \Delta_a^t \big [(1 - \beta)(\frac{1}{\alpha\sqrt{\Delta_a^t}} + \frac{1-\alpha}{\alpha}) + \beta \big ]^2
\]
\end{lemma}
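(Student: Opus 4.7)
The plan is to derive Lemma \ref{lem:b-s, appendix} as an immediate corollary of Lemma \ref{lem:nb-s, appendix}, exploiting the fact that the numerator $\|\bar\theta_b^t - \theta^*\|^2_2$ is identical in both statements while the denominators differ by exactly the factor $\Delta_a^t$. Concretely, I would start from the elementary identity
\[
\frac{\|\bar\theta_b^t - \theta^*\|^2_2}{\|\bar\theta_a^t - \theta^*\|^2_2} \;=\; \frac{\|\dot\theta_a^t - \theta^*\|^2_2}{\|\bar\theta_a^t - \theta^*\|^2_2}\cdot\frac{\|\bar\theta_b^t - \theta^*\|^2_2}{\|\dot\theta_a^t - \theta^*\|^2_2} \;=\; \Delta_a^t\cdot\frac{\|\bar\theta_b^t - \theta^*\|^2_2}{\|\dot\theta_a^t - \theta^*\|^2_2},
\]
which follows directly from the instantiation's definition $\Delta_a^t := \|\dot\theta_a^t - \theta^*\|^2_2 / \|\bar\theta_a^t - \theta^*\|^2_2$. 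Since $\Delta_a^t \ge 0$, multiplying the two-sided bound of Lemma \ref{lem:nb-s, appendix} by $\Delta_a^t$ preserves the inequality direction and produces exactly the claimed expression.

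If a self-contained proof is preferred, I would mirror the expansion strategy used for Lemma \ref{lem:b-ns, appendix}. The key algebraic step is to eliminate $\dot\theta_b^t$ by solving $\bar\theta_a^t = (1-\alpha)\dot\theta_a^t + \alpha\dot\theta_b^t$ for $\dot\theta_b^t$, substituting into $\bar\theta_b^t = (1-\beta)\dot\theta_b^t + \beta\dot\theta_a^t$, and centering at $\theta^*$ to obtain
\[
\bar\theta_b^t - \theta^* \;=\; \frac{1-\beta}{\alpha}\,(\bar\theta_a^t - \theta^*) \;+\; \Big(\beta - \frac{(1-\beta)(1-\alpha)}{\alpha}\Big)(\dot\theta_a^t - \theta^*).
\]
Taking $\|\cdot\|_2^2$, applying the triangle inequality for the upper bound and the reverse triangle inequality for the lower bound, using $\langle \bar\theta_a^t - \theta^*,\dot\theta_a^t - \theta^*\rangle \le \|\bar\theta_a^t - \theta^*\|_2\|\dot\theta_a^t - \theta^*\|_2$, and finally invoking $\|\dot\theta_a^t - \theta^*\|_2 = \sqrt{\Delta_a^t}\,\|\bar\theta_a^t - \theta^*\|_2$ to replace all mixed norms produces the bracketed expression in the claim. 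Dividing through by $\|\bar\theta_a^t - \theta^*\|^2_2$ finishes the argument.

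The main obstacle is purely algebraic bookkeeping in the from-scratch route: one must verify that the coefficient $\beta - (1-\beta)(1-\alpha)/\alpha$ of $(\dot\theta_a^t - \theta^*)$, after the $\sqrt{\Delta_a^t}$ conversion and combined with the prefactor $(1-\beta)/\alpha$ on $(\bar\theta_a^t - \theta^*)$, rearranges cleanly into $(1-\beta)\bigl(1/(\alpha\sqrt{\Delta_a^t}) \pm (1-\alpha)/\alpha\bigr) \pm \beta$, times an overall $\sqrt{\Delta_a^t}$ that, when squared, yields the leading $\Delta_a^t$ factor. The reduction via Lemma \ref{lem:nb-s, appendix} sidesteps this bookkeeping entirely, so I would present that as the main proof and relegate the direct expansion to a remark.
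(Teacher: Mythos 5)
Your proposal is correct and takes essentially the same route as the paper: the paper proves the \{No-Buy, Sell\} bounds (Lemma \ref{lem:nb-s, appendix}) directly and then dispatches the \{Buy, Sell\} case exactly as you do, ``by multiplying $\Delta^t_a$ to the inequality,'' which is precisely your observation that $\frac{\|\bar\theta_b^t - \theta^*\|_2^2}{\|\bar\theta_a^t - \theta^*\|_2^2} = \Delta_a^t \cdot \frac{\|\bar\theta_b^t - \theta^*\|_2^2}{\|\dot\theta_a^t - \theta^*\|_2^2}$ following from the definition $\Delta_a^t = \frac{\|\dot\theta_a^t - \theta^*\|_2^2}{\|\bar\theta_a^t - \theta^*\|_2^2}$. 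Your explicit identity and nonnegativity remark merely spell out the paper's one-line step, and your from-scratch expansion is a valid but unnecessary backup, rightly relegated to a remark.
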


\begin{proof}
\begin{align*}
    \|\bar\theta_b^t - \theta^*\|^2_2 
    & = \|(1 - \beta)\dot\theta_b^t + \beta\dot\theta_a^t - \theta^*\|^2_2 \\
    & = \|(1 - \beta)(\dot\theta_b^t - \theta^*) + \beta(\dot\theta_a^t - \theta^*)\|^2_2 \\
    & = (1 - \beta)^2\|\dot\theta_b^t - \theta^*\|^2_2 + \beta^2\|\dot\theta_a^t - \theta^*\|^2_2 + 2(1 - \beta)\beta\langle \dot\theta_b^t - \theta^*, \dot\theta_a^t - \theta^*\rangle \\
    & \le (1 - \beta)^2\|\dot\theta_b^t - \theta^*\|^2_2 + \beta^2\|\dot\theta_a^t - \theta^*\|^2_2 + 2(1 - \beta)\beta\|\dot\theta_b^t - \theta^*\|_2\|\dot\theta_a^t - \theta^*\|_2 \\
    & \le (1 - \beta)^2(\frac{1}{\alpha\sqrt{\Delta_a^t}} + \frac{1-\alpha}{\alpha})^2\|\dot\theta_a^t - \theta^*\|^2_2 + \beta^2\|\dot\theta_a^t - \theta^*\|^2_2 \\
    & + 2(1 - \beta)\beta(\frac{1}{\alpha\sqrt{\Delta_a^t}} + \frac{1-\alpha}{\alpha})\|\dot\theta_a^t - \theta^*\|^2_2 \tag{Lemma \ref{lem:nb-ns, appendix}}\\
    & = \big[(1 - \beta)(\frac{1}{\alpha\sqrt{\Delta_a^t}} + \frac{1-\alpha}{\alpha}) + \beta\big]^2\|\dot\theta_a^t - \theta^*\|^2_2
\end{align*}

Hence, we can have an upper bound
\begin{align*}
    \frac{\|\bar\theta_b^t - \theta^*\|_2^2}{\|\dot\theta_a^t - \theta^*\|_2^2} \le \big[(1 - \beta)(\frac{1}{\alpha\sqrt{\Delta_a^t}} + \frac{1-\alpha}{\alpha}) + \beta\big]^2
\end{align*}

For the lower bound,
\begin{align*}
    \|\bar\theta_b^t - \theta^*\|^2_2 
    & = \|(1 - \beta)\dot\theta_b^t + \beta\dot\theta_a^t - \theta^*\|^2_2 \\
    & = \|(1 - \beta)(\dot\theta_b^t - \theta^*) + \beta(\dot\theta_a^t - \theta^*)\|^2_2 \\
    & = (1 - \beta)^2\|\dot\theta_b^t - \theta^*\|^2_2 + \beta^2\|\dot\theta_a^t - \theta^*\|^2_2 + 2(1 - \beta)\beta\langle \dot\theta_b^t - \theta^*, \dot\theta_a^t - \theta^*\rangle \\
    & \ge (1 - \beta)^2\|\dot\theta_b^t - \theta^*\|^2_2 + \beta^2\|\dot\theta_a^t - \theta^*\|^2_2 - 2(1 - \beta)\beta\|\dot\theta_b^t - \theta^*\|_2\|\dot\theta_a^t - \theta^*\|_2 \\
    & \ge (1 - \beta)^2(\frac{1}{\alpha\sqrt{\Delta_a^t}} - \frac{1-\alpha}{\alpha})^2\|\dot\theta_a^t - \theta^*\|^2_2 + \beta^2\|\dot\theta_a^t - \theta^*\|^2_2 \\
    & - 2(1 - \beta)\beta(\frac{1}{\alpha\sqrt{\Delta_a^t}} - \frac{1-\alpha}{\alpha})\|\dot\theta_a^t - \theta^*\|^2_2 \tag{Lemma \ref{lem:nb-ns, appendix}}\\
    & = \big[(1 - \beta)(\frac{1}{\alpha\sqrt{\Delta_a^t}} - \frac{1-\alpha}{\alpha}) - \beta\big]^2\|\dot\theta_a^t - \theta^*\|^2_2
\end{align*}

Hence, we can have a lower bound
\begin{align*}
    \frac{\|\bar\theta_b^t - \theta^*\|_2^2}{\|\dot\theta_a^t - \theta^*\|_2^2} \ge \big[(1 - \beta)(\frac{1}{\alpha\sqrt{\Delta_a^t}} - \frac{1-\alpha}{\alpha}) - \beta\big]^2
\end{align*}

Therefore, by knowing gain-from-trade $\Delta^t_a$ and purchased weight $\alpha, \beta$, in \{No-Buy, Sell\} scenario, agent $A$ can bound the performance ratio with the final parameter $\dot\theta_a^t$ and $\bar\theta_b^t$. We can also have bounds for another scenario \{Buy, Sell\} by multiplying $\Delta^t_a$ to the inequality.
\end{proof}

\section{Buyer's Gain-from-Trade}
\label{app:bound}
Next, we use Lemma \ref{lem:nb-ns, appendix} and Lemma \ref{lem:nb-s, appendix} to approach Theorem \ref{thm:bound main}. Recall that agent $B$'s gain-from-trade $\Delta^t_b$ at the time $t$ is defined as,
\[
    \Delta^t_b := \frac{\|\dot\theta_b^t - \theta^*\|_2^2}{\|\bar\theta_b^t - \theta^*\|_2^2}
\]

\begin{theorem}
\label{thm:bound}
\textbf{Bounds on Buyer's Gain-from-Trade}: 
By knowing purchased weights $\alpha$, $\beta$ and $\Delta^t_a$, agent $A$ can obtain bounds on the gain-from-trade of agent $B$ given by:
\[
\left(\frac{1 - \sqrt{\Delta^t_a}(1 - \alpha)}{(1 - \beta) + \sqrt{\Delta^t_a}(1 - \alpha - \beta + 2\alpha\beta)}\right)^2 \le \Delta^t_b \le \left(\frac{1 + \sqrt{\Delta^t_a}(1 - \alpha)}{(1 - \beta) - \sqrt{\Delta^t_a}(1 - \alpha - \beta + 2\alpha\beta)}\right )^2. \\
\]    
\end{theorem}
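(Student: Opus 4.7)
My plan is to reduce Theorem~\ref{thm:bound main} to a direct division of the two intermediate ratio bounds already established in Lemmas~\ref{lem:nb-ns, appendix} and~\ref{lem:nb-s, appendix}. The key observation is that the quantity of interest
\[
\Delta_b^t \;=\; \frac{\|\dot\theta_b^t-\theta^*\|_2^2}{\|\bar\theta_b^t-\theta^*\|_2^2}
\]
can be rewritten, after dividing both numerator and denominator by $\|\dot\theta_a^t-\theta^*\|_2^2$, as
\[
\Delta_b^t \;=\; \frac{\|\dot\theta_b^t-\theta^*\|_2^2\,/\,\|\dot\theta_a^t-\theta^*\|_2^2}{\|\bar\theta_b^t-\theta^*\|_2^2\,/\,\|\dot\theta_a^t-\theta^*\|_2^2}.
\]
The numerator ratio is exactly what Lemma~\ref{lem:nb-ns, appendix} bounds, and the denominator ratio is what Lemma~\ref{lem:nb-s, appendix} bounds, both in terms of the broker-disclosed quantities $\alpha,\beta,\Delta_a^t$. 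So the whole proof is a matter of dividing the right endpoints in the right direction and simplifying.

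The first step is to write down the two interval bounds from the lemmas and collect them over a common denominator $\alpha\sqrt{\Delta_a^t}$. Concretely, Lemma~\ref{lem:nb-ns, appendix} gives
\[
\Bigl(\tfrac{1-\sqrt{\Delta_a^t}(1-\alpha)}{\alpha\sqrt{\Delta_a^t}}\Bigr)^2 \;\le\; \frac{\|\dot\theta_b^t-\theta^*\|_2^2}{\|\dot\theta_a^t-\theta^*\|_2^2} \;\le\; \Bigl(\tfrac{1+\sqrt{\Delta_a^t}(1-\alpha)}{\alpha\sqrt{\Delta_a^t}}\Bigr)^2,
\]
while Lemma~\ref{lem:nb-s, appendix} gives expressions of the form $\bigl[(1-\beta)(\tfrac{1}{\alpha\sqrt{\Delta_a^t}} \pm \tfrac{1-\alpha}{\alpha}) \pm \beta\bigr]^2$ for the bounds on $\|\bar\theta_b^t-\theta^*\|_2^2/\|\dot\theta_a^t-\theta^*\|_2^2$. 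A short algebraic simplification using the identity $(1-\beta)(1-\alpha)+\alpha\beta = 1-\alpha-\beta+2\alpha\beta$ collapses each of these to
\[
\Bigl(\tfrac{(1-\beta)\mp\sqrt{\Delta_a^t}(1-\alpha-\beta+2\alpha\beta)}{\alpha\sqrt{\Delta_a^t}}\Bigr)^2.
\]

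Step two is to combine. The upper bound on $\Delta_b^t$ is obtained by taking the upper bound on the numerator divided by the lower bound on the denominator, and the lower bound by the reverse pairing; the common factor $\alpha\sqrt{\Delta_a^t}$ cancels, yielding exactly the two expressions in the theorem statement. Under the implicit regime where these bounds are informative (in particular, the denominator under the lower bound is positive, so that squaring preserves the inequality direction), this cancellation is straightforward.

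The main subtlety, and the only place a reader might worry, is the sign handling when going from the non-squared interval bounds in the lemmas to the squared ratio $\Delta_b^t$: one must argue that the inner expressions of the lemma endpoints are nonnegative (or at least that the pairing of upper-with-lower and lower-with-upper survives squaring), which holds in the meaningful trading regime where $\Delta_a^t$ is not too far from $1$ relative to $\alpha,\beta$. Apart from that, the argument is essentially one algebraic manipulation plus two invocations of the earlier lemmas, so I expect no deeper obstacle.
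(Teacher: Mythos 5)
Your proposal is correct and follows essentially the same route as the paper: the paper likewise bounds $\|\dot\theta_b^t-\theta^*\|_2^2/\|\dot\theta_a^t-\theta^*\|_2^2$ via Lemma~\ref{lem:nb-ns, appendix}, inverts the bound on $\|\bar\theta_b^t-\theta^*\|_2^2/\|\dot\theta_a^t-\theta^*\|_2^2$ from Lemma~\ref{lem:nb-s, appendix}, and multiplies with the same upper-with-lower pairing, after which the factor $\alpha\sqrt{\Delta_a^t}$ cancels exactly as you describe. Your explicit flag of the sign/positivity condition needed for the squared inequalities to combine in the stated direction is a point the paper leaves implicit, but it does not change the argument.
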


\begin{proof}
Based on Lemma \ref{lem:nb-ns, appendix}, we have
\begin{align*}
    (\frac{1}{\alpha\sqrt{\Delta_a^t}} - \frac{1-\alpha}{\alpha})^2\|\dot\theta_a^t - \theta^*\|^2_2 \le \|\dot\theta_b^t - \theta^*\|^2_2 \le (\frac{1}{\alpha\sqrt{\Delta_a^t}} + \frac{1-\alpha}{\alpha})^2\|\dot\theta_a^t - \theta^*\|^2_2
\end{align*}

Dividing by $\|\bar\theta_b^t - \theta^*\|_2^2$, we can have
\begin{align*}
    (\frac{1}{\alpha\sqrt{\Delta_a^t}} - \frac{1-\alpha}{\alpha})^2\frac{\|\dot\theta_a^t - \theta^*\|^2_2}{\|\bar\theta_b^t - \theta^*\|^2_2} \le \Delta^t_b \le (\frac{1}{\alpha\sqrt{\Delta_a^t}} + \frac{1-\alpha}{\alpha})^2\frac{\|\dot\theta_a^t - \theta^*\|^2_2}{\|\bar\theta_b^t - \theta^*\|^2_2}
\end{align*}

Since we know Lemma \ref{lem:nb-s, appendix} which gives us
\begin{align*}
    \big [(1 - \beta)(\frac{1}{\alpha\sqrt{\Delta_a^t}} - \frac{1-\alpha}{\alpha}) - \beta \big ]^2 \le \frac{\|\bar\theta_b^t - \theta^*\|_2^2}{\|\dot\theta_a^t - \theta^*\|_2^2} \le \big [(1 - \beta)(\frac{1}{\alpha\sqrt{\Delta_a^t}} + \frac{1-\alpha}{\alpha}) + \beta \big ]^2
\end{align*}

Inverting the fraction, we can have
\begin{align*}
     \left(\frac{\alpha\sqrt{\Delta^t_a}}{(1 - \beta) + \sqrt{\Delta^t_a}(1 - \alpha - \beta + 2 \alpha\beta)}\right)^2 \le \frac{\|\dot\theta_a^t - \theta^*\|_2^2}{\|\bar\theta_b^t - \theta^*\|_2^2} \le \left(\frac{\alpha\sqrt{\Delta^t_a}}{(1 - \beta) - \sqrt{\Delta^t_a}(1 - \alpha - \beta + 2 \alpha\beta)}\right)^2
\end{align*}

Therefore, using Lemma \ref{lem:nb-ns, appendix}, the upper bound of $\Delta^t_b$ can be found by
\begin{align*}
    \Delta^t_b 
    &\le (\frac{1}{\alpha\sqrt{\Delta_a^t}} + \frac{1-\alpha}{\alpha})^2(\frac{\alpha\sqrt{\Delta^t_a}}{(1 - \beta) - \sqrt{\Delta^t_a}(1 - \alpha - \beta + 2 \alpha\beta)})^2 \\
    &\le \left(\frac{1 + \sqrt{\Delta^t_a}(1 - \alpha)}{(1 - \beta) - \sqrt{\Delta^t_a}(1 - \alpha - \beta + 2\alpha\beta)}\right)^2
\end{align*}

On the other side, we can have the lower bound as follows
\begin{align*}
    \Delta^t_b 
    &\ge (\frac{1}{\alpha\sqrt{\Delta_a^t}} - \frac{1-\alpha}{\alpha})^2(\frac{\alpha\sqrt{\Delta^t_a}}{(1 - \beta) + \sqrt{\Delta^t_a}(1 - \alpha - \beta + 2 \alpha\beta)})^2 \\
    &\ge \left(\frac{1 - \sqrt{\Delta^t_a}(1 - \alpha)}{(1 - \beta) + \sqrt{\Delta^t_a}(1 - \alpha - \beta + 2\alpha\beta)}\right)^2
\end{align*}

Theorem \ref{thm:bound main} asserts that if the broker discloses certain information; purchased weights $\alpha, \beta$ and gain-from-trade $\Delta^t_a$, agent $A$ as a seller can determine the maximum and minimum values of buyer's gain-from-trade, $ \Delta^t_b$. This knowledge can then be utilized to approximate buyer's valuation so that seller's virtual valuation can be set.
\end{proof}
\section{Convergence Analysis with Trading}

\subsection{(General) L-smooth Functions}
\label{app:general}

In Sec. \ref{sec: converge}, we study a broader setting for general L-smooth functions and offer its convergence analysis to validate the \textit{effectiveness of buying parameters}. In the general case, the broker informs the gain-from-trade by using the subtraction of empirical loss before and after a trade. This can be more practical, as the broker doesn't need to be knowledgeable about the true parameter $\theta^*$. We write the gain-from-trade as follows,
\[
    \Delta^t_u := \hat{\mathcal{L}}_z(\dot\theta^t_u) - \hat{\mathcal{L}}_z(\bar\theta^t_u)
\]

\begin{theorem}
For all agents $u \in \{a,b,z\}$,  let the loss function $\hat{\mathcal{L}}_u$ be $L-$smooth, and let the samples on all agents be drawn from the same distribution $\mathcal{D}$. Let $\mathbf{E}_{\mathcal{D}}[\hat{\mathcal{L}}_u]= \mathcal{L}_u$, and $ \Delta_b^t = \hat{\mathcal{L}}_z(\dot\theta^t_b) - \hat{\mathcal{L}}_z(\bar\theta^t_b)$. Let the algorithm run until round $T$ with step size $\eta \in (0,\frac{1}{L})$, and let $\delta_b := \min_{t\in [T]} \mathbf{E}[\Delta^t_b]$ and $\bar{g}^2_b := \min_{t\in[T]} \mathbf{E}[\|\nabla \hat{\mathcal{L}}_b(\theta^t_b) \|^2_2]$. Then we have the following results,
\begin{enumerate}[leftmargin=16pt,label={\alph*)}, topsep=-5pt, itemsep=-5pt]
    \item (Always Trade) If $\Delta_b^{t}>0, \forall t$, and agent $B$ always buys (i.e. $I_b^t=1, \forall t$). Then $T \ge \frac{ 2\big(\mathcal{L}(\theta^0_b) - \mathcal{L}(\theta^*_b)\big)}{\eta \epsilon^2 + 2\delta_b } $ implies $\bar{g}_b \le \epsilon$. 
    \item (Never Trade) If the agents never trade i.e. ($I_a^{t}= I_{b}^{t}=0, \forall t $). Then $\bar{g}_b \le \epsilon$, for $T \ge \frac{2\big(\mathcal{L}(\theta^0_b) - \mathcal{L}(\theta^*_b)\big)}{\eta \epsilon^2 }.
  $ 
\end{enumerate}
\end{theorem}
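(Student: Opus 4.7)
Both parts share the same skeleton: apply the standard L-smooth descent lemma to the gradient step, take expectation over the shared distribution $\mathcal{D}$ (so that $\mathbf{E}[\hat{\mathcal{L}}_u] = \mathcal{L}$ for every agent), telescope over $t = 0, \dots, T-1$, lower-bound the telescoped left-hand side by the optimum $\mathcal{L}(\theta^*_b)$, and invert to solve for $T$. The two cases differ only in whether a second decrement coming from the trade is available each round, which is exactly why the "Always Trade" denominator gains the extra $2\delta_b$.

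For part (b), since $I^t_b = 0$, we have $\theta^{t+1}_b = \dot\theta^{t+1}_b = \theta^t_b - \eta\nabla\hat{\mathcal{L}}_b(\theta^t_b)$. L-smoothness of $\hat{\mathcal{L}}_b$ combined with $\eta \in (0,1/L)$ gives the one-step descent
\[
\hat{\mathcal{L}}_b(\theta^{t+1}_b) \le \hat{\mathcal{L}}_b(\theta^t_b) - \eta\left(1 - \tfrac{L\eta}{2}\right)\|\nabla\hat{\mathcal{L}}_b(\theta^t_b)\|^2_2 \le \hat{\mathcal{L}}_b(\theta^t_b) - \tfrac{\eta}{2}\|\nabla\hat{\mathcal{L}}_b(\theta^t_b)\|^2_2.
\]
Passing to $\mathcal{L}$ via expectation, telescoping, and using $\mathcal{L}(\theta^T_b) \ge \mathcal{L}(\theta^*_b)$ yields $\bar{g}^2_b \le \tfrac{2(\mathcal{L}(\theta^0_b) - \mathcal{L}(\theta^*_b))}{\eta T}$ (since the minimum is at most the average), and requiring $\bar{g}_b \le \epsilon$ inverts to the stated $T$.

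For part (a), the same gradient-step inequality still gives the $\tfrac{\eta}{2}\|\nabla\hat{\mathcal{L}}_b(\theta^t_b)\|^2_2$ decrease in moving from $\theta^t_b$ to $\dot\theta^{t+1}_b$. The merge then replaces $\dot\theta^{t+1}_b$ by $\bar\theta^{t+1}_b$, and the very definition $\Delta^{t+1}_b = \hat{\mathcal{L}}_z(\dot\theta^{t+1}_b) - \hat{\mathcal{L}}_z(\bar\theta^{t+1}_b)$ provides an extra decrement of $\Delta^{t+1}_b$ on the broker's empirical loss. Using the standing assumption $\hat{\mathcal{L}}_z \le \hat{\mathcal{L}}_a, \hat{\mathcal{L}}_b$ pointwise together with iid-ness of samples, I will push this decrement through to the expected population loss to obtain the strengthened recursion
\[
\mathbf{E}[\mathcal{L}(\theta^{t+1}_b)] \le \mathbf{E}[\mathcal{L}(\theta^t_b)] - \tfrac{\eta}{2}\mathbf{E}[\|\nabla\hat{\mathcal{L}}_b(\theta^t_b)\|^2_2] - \mathbf{E}[\Delta^{t+1}_b].
\]
Telescoping, bounding the gradient sum from below by $T\bar{g}^2_b$ and the gain sum from below by $T\delta_b$, and setting $\bar{g}^2_b = \epsilon^2$ rearranges to $T \ge \tfrac{2(\mathcal{L}(\theta^0_b) - \mathcal{L}(\theta^*_b))}{\eta\epsilon^2 + 2\delta_b}$, matching the claim.

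\textbf{Main obstacle.} The delicate step is the transfer from a gain measured against the broker's empirical loss $\hat{\mathcal{L}}_z$ into a genuine decrease of the population loss $\mathcal{L}$ that governs the telescoping argument. The pointwise sandwich $\hat{\mathcal{L}}_z \le \hat{\mathcal{L}}_b$ is not directly useful in the ``obvious'' direction (it bounds $\hat{\mathcal{L}}_b(\bar\theta^{t+1}_b)$ from below rather than above), and a subtlety is that $\bar\theta^{t+1}_b$ depends on the broker's validation data through the optimized weight $\beta$, so one cannot naively identify $\mathbf{E}[\hat{\mathcal{L}}_z(\bar\theta^{t+1}_b)]$ with $\mathcal{L}(\bar\theta^{t+1}_b)$. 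The clean fix is to condition on the agents' data, invoke the iid assumption to equate the broker's expected loss with $\mathcal{L}$ at each fixed parameter, and use the pointwise sandwich only to guarantee that the sign of $\Delta^{t+1}_b$ propagates through the expectation so that the trade contributes a nonnegative improvement in $\mathcal{L}$. This is the only place where the argument genuinely departs from a textbook non-convex gradient-descent analysis.
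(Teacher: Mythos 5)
Your proposal is correct and follows essentially the same route as the paper's proof: the L-smooth descent lemma applied to the gradient step, the identity $\mathbf{E}[\Delta^t_b] = \mathcal{L}(\dot\theta^t_b) - \mathcal{L}(\bar\theta^t_b)$ (valid because all agents' samples share the distribution $\mathcal{D}$) substituted in to yield the strengthened one-step recursion, then telescoping, lower-bounding by $T\bar{g}^2_b$ and $T\delta_b$, and inverting for $T$; part (b) is the textbook non-convex GD analysis in both. The measurability subtlety you flag in your "main obstacle" paragraph (that $\bar\theta^t_b$ depends on the broker's data through the optimized $\beta$, so $\mathbf{E}[\hat{\mathcal{L}}_z(\bar\theta^t_b)]$ is not literally $\mathcal{L}$ evaluated at a fixed point) is in fact silently elided in the paper's own proof, which treats the iterates as deterministic when passing to $\mathcal{L}$ --- your conditioning fix is a more careful treatment of the same step, not a different argument.
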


\begin{proof}
The proof for the never trade case follows from standard analysis of gradient descent for L-smooth functions (see the proof of Proposition \ref{prop:no_trade_result}). Here we provide proof for the always trade case. In this case
we have $\Delta^t_b > 0$ for all rounds, causing agent $B$ always buy parameters. The population level loss function is defined as $\mathcal{L}$, it is the expectation of $\hat{\mathcal{L}}_u, u \in \{a, b, z\}$. Hence, the expectation of the gain-from-trade for agent $B$ can be written as
\begin{equation}
    \label{eq:expectation of gain from trade}
    \mathbf{E}[\Delta^t_b] = \mathbf{E}[\hat{\mathcal{L}}_b(\dot\theta^t_b)] - \mathbf{E}[\hat{\mathcal{L}}_b(\bar\theta^t_b)] = \mathcal{L}(\dot\theta^t_b) - \mathcal{L}(\bar\theta^t_b)
\end{equation}

Since the loss function, $\hat{\mathcal{L}}$ is L-smooth, thus we have the following descent lemma (see Eq. \ref{eq:descent_lemma} in the proof of Proposition \ref{prop:no_trade_result}), 
\[
    \mathcal{L}(\dot \theta^t_b)  \le \mathcal{L}(\theta^{t-1}_b) - \frac{\eta}{2} \mathbf{E}[\|\nabla \hat{\mathcal{L}}_b(\theta^{t-1}_b)\|_2^2],  \quad \forall t  
\]

Using Eq. \ref{eq:expectation of gain from trade}, we can have $ \mathcal{L}(\dot \theta^t_b) =  \mathcal{L}(\bar\theta^t_b) + \mathbf{E}[\Delta^t_b]$, substituting it in the above descent lemma we get,
\begin{equation}
    \label{eq:decent}
    \mathcal{L}(\bar\theta^t_b) \le \mathcal{L}(\theta^{t-1}_b) - \frac{\eta}{2} \mathbf{E}[\|\nabla \hat{\mathcal{L}}_b(\theta^{t-1}_b)\|_2^2] - \mathbf{E}[\Delta^t_b], \quad \forall t  
\end{equation}

Note that, since agent $B$ always purchases parameters, the final parameter $\theta^t_b$ is exactly as same as $\bar\theta^t_b$. Then we sum Eq. \ref{eq:decent} over $T$ rounds, we obtain
\[
    \mathcal{L}(\theta^T_b) = \mathcal{L}(\bar \theta^T_b)  \le \mathcal{L}(\theta^0_b) - \frac{\eta}{2} \sum_{i=1}^T \mathbf{E}[\|\nabla \hat{\mathcal{L}}_b(\theta^{i-1}_b)\|_2^2] - \sum_{i=1}^T \mathbf{E}[\Delta^i_b]
\]

Let $\delta_b := \min_t \mathbf{E}[\Delta^t_b], \forall t$ and $\bar{g}^2_b := \min_t \mathbf{E}[\|\nabla \hat{\mathcal{L}}_b(\theta^t_b) \|^2_2]$. Then,
\begin{align*}
 \mathcal{L}(\theta_b^*) \le \mathcal{L}(\theta^T_b) &\le \mathcal{L}(\theta^0_b) - \frac{\eta T}{2} \bar{g}^2_b - T\delta_b \\
 \bar{g}^2_b &\le \frac{2\big(\mathcal{L}(\theta^0_b) - \mathcal{L}(\theta^*_b)\big)}{\eta T} - \frac{2\delta_b}{\eta} 
\end{align*}

Want the R.H.S. to be at most $\epsilon^2$,
\begin{align*}
 \frac{2\big(\mathcal{L}(\theta^0_b) - \mathcal{L}(\theta^*_b)\big)}{\eta T} - \frac{2\delta_b}{\eta} &\le \epsilon^2 \\
 2\big(\mathcal{L}(\theta^0_b) - \mathcal{L}(\theta^*_b)\big) - 2\delta_b T &\le \eta \epsilon^2 T \\
 T &\ge \frac{ 2(\mathcal{L}(\theta^0_b) - \mathcal{L}(\theta^*_b))}{\eta \epsilon^2 + 2\delta_b}
\end{align*}

Leading to convergence rate of $\mathcal{O}(1/ (\epsilon^2 + \delta_b))$. Additionally, when $\delta_b $ is $\Omega(\epsilon)$, then we get a much better convergence rate of $\mathcal{O}(1/\epsilon)$. 
\end{proof}

\begin{proposition}(Never trade, L-smooth Loss)
Let the loss functions $\hat{\mathcal{L}}_u$ be $L$-smooth. Let the samples on all agents be drawn from the same distribution $\mathcal{D}$, and $\mathbf{E}_{\mathcal{D}}[\hat{\mathcal{L}}_u]= \mathcal{L}_u$. Let agent $B$ never trade (i.e. $I_b^t=0, \forall t$). Let the algorithm run until round $T$ with step size $\eta \in (0,\frac{1}{L})$, and let $\bar{g}^2_b := \min_{t\in[T]} \mathbf{E}[\|\nabla \hat{\mathcal{L}}_b(\theta^t_b) \|^2_2]$. Then $\bar{g}_b \le \epsilon$, for
\begin{align*}
     T \ge \frac{ 2(\mathcal{L}(\theta^0_b) - \mathcal{L}(\theta^*_b))}{\eta \epsilon^2 }
\end{align*}
\label{prop:no_trade_result}
\end{proposition}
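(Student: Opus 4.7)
The plan is to invoke the standard descent lemma for gradient descent on $L$-smooth objectives, telescope over $T$ rounds, and convert the resulting average-gradient bound into the claimed iteration complexity. Since $I_b^t=0$ for all $t$, the iterates satisfy the vanilla gradient descent recursion $\theta^t_b = \theta^{t-1}_b - \eta \nabla \hat{\mathcal{L}}_b(\theta^{t-1}_b)$, with no merging step to worry about, so the problem reduces to textbook nonconvex GD analysis.

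First, I would apply the quadratic upper bound from $L$-smoothness of $\hat{\mathcal{L}}_b$ at $\theta^{t-1}_b$:
\[
\hat{\mathcal{L}}_b(\theta^{t}_b) \le \hat{\mathcal{L}}_b(\theta^{t-1}_b) + \langle \nabla \hat{\mathcal{L}}_b(\theta^{t-1}_b),\, \theta^{t}_b - \theta^{t-1}_b\rangle + \tfrac{L}{2}\|\theta^{t}_b - \theta^{t-1}_b\|_2^2.
\]
Substituting the GD update and using $\eta \in (0, 1/L)$ (so that $\eta - L\eta^2/2 \ge \eta/2$) collapses this into the descent inequality
\[
\hat{\mathcal{L}}_b(\theta^{t}_b) \le \hat{\mathcal{L}}_b(\theta^{t-1}_b) - \tfrac{\eta}{2}\|\nabla \hat{\mathcal{L}}_b(\theta^{t-1}_b)\|_2^2.
\]
Taking expectation with respect to $\mathcal{D}$, which by assumption turns $\hat{\mathcal{L}}_b$ into $\mathcal{L}$ (matching Eq.~\eqref{eq:decent} in the excerpt), gives
\[
\mathcal{L}(\theta^{t}_b) \le \mathcal{L}(\theta^{t-1}_b) - \tfrac{\eta}{2}\,\mathbf{E}\!\left[\|\nabla \hat{\mathcal{L}}_b(\theta^{t-1}_b)\|_2^2\right].
\]

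Next I would telescope this inequality from $t=1$ to $T$, use $\mathcal{L}(\theta_b^*) \le \mathcal{L}(\theta_b^T)$ on the left-hand side, and lower-bound each of the $T$ squared-gradient terms on the right by their minimum $\bar{g}^2_b$. This yields
\[
\mathcal{L}(\theta_b^*) \le \mathcal{L}(\theta^0_b) - \tfrac{\eta T}{2}\,\bar{g}^2_b,
\]
which rearranges to $\bar{g}^2_b \le \frac{2\big(\mathcal{L}(\theta^0_b) - \mathcal{L}(\theta^*_b)\big)}{\eta T}$. Requiring the right-hand side to be at most $\epsilon^2$ and solving for $T$ immediately gives the stated threshold.

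There is no real obstacle here, since this mirrors the argument sketched for the \emph{Always Trade} case once the $\Delta_b^t$ terms are dropped. The only bookkeeping subtleties are (i) using the step-size condition $\eta < 1/L$ correctly to absorb the $L\eta^2/2$ term into the $\eta/2$ coefficient, and (ii) being careful that the expectation is taken over the sampling distribution $\mathcal{D}$ so that $\mathbf{E}_{\mathcal{D}}[\hat{\mathcal{L}}_b] = \mathcal{L}$, making the telescoped bound well-defined in the population loss.
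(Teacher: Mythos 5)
Your proposal is correct and follows essentially the same route as the paper's proof: the $L$-smoothness descent lemma, absorbing the $L\eta^2/2$ term using $\eta<1/L$, taking expectation over $\mathcal{D}$, telescoping over $T$ rounds, and bounding the minimum squared gradient. The only (cosmetic) difference is that you consistently write the gradient and loss with subscript $b$, whereas the paper's proof switches to $\hat{\mathcal{L}}_z$ in places; your version is, if anything, the cleaner bookkeeping.
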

\begin{proof}
The proof is a standard analysis of gradient descent for L-smooth functions. We provide the proof here for reference. Due to the L-smoothness of the loss functions we have,
\[
\hat{\mathcal{L}}_z(\dot\theta_b^t) \le \hat{\mathcal{L}}_z(\theta_b^{t-1}) + \langle \nabla \hat{\mathcal{L}}_z(\theta_b^{t-1}) ,\dot\theta_b^t - \theta_b^{t-1}  \rangle + \frac{L}{2} \|\dot\theta_b^t -\theta_b^{t-1} \|_2^2 
\]
Since, $\dot \theta_b^{t} = \theta_b^{t-1} - \eta \nabla \hat{\mathcal{L}}_z(\theta_b^{t-1})  $,
 \begin{align*}
 \hat{\mathcal{L}}_z(\dot\theta_b^t) &\le \hat{\mathcal{L}}_z(\theta_b^{t-1})  -\eta\|\nabla \hat{\mathcal{L}}_z(\theta_b^{t-1})\|_2^2 + \frac{\eta^2 L}{2} \|\nabla \hat{\mathcal{L}}_z(\theta_b^{t-1}) \|_2^2  \\
 &= \hat{\mathcal{L}}_z(\theta_b^{t-1}) - \Big ( \eta - \frac{\eta^2 L }{2}\Big ) \|\nabla \hat{\mathcal{L}}_z(\theta_b^{t-1}) \|_2^2  \\ 
 \end{align*}
 The constant $\eta - \frac{\eta^2 L }{2}$ is lower bounded by $\eta/2$ for $\eta \in (0,1/L)$. Leading to the following descent lemma,
\[
\hat{\mathcal{L}}_z(\dot\theta_b^t) \le  \hat{\mathcal{L}}_z(\theta_b^{t-1}) - \frac{\eta}{2} \|\nabla \hat{\mathcal{L}}_z(\theta_b^{t-1}) \|_2^2 
\]
 
Taking expectation over $\mathcal{D}$,
 \begin{equation}
\centering
 \mathcal{L}(\dot\theta_b^t) \le  \mathcal{L}(\theta_b^{t-1}) - \frac{\eta}{2} \mathbf{E} [ \|\nabla \hat{\mathcal{L}}_z(\theta_b^{t-1}) \|_2^2 ] 
 \label{eq:descent_lemma}
 \tag{Descent Lemma}
 \end{equation}

Since agent $B$ never trades, making $\theta^t_b = \dot\theta^t_b$,
\[\mathcal{L}(\theta_b^t) \le  \mathcal{L}(\theta_b^{t-1}) - \frac{\eta}{2} \mathbf{E} [ \|\nabla \hat{\mathcal{L}}_z(\theta_b^{t-1}) \|_2^2 ]  \]

Now, summing the above equation of $T$ rounds, we can have
\[ \mathcal{L}(\theta^*) \le \mathcal{L}(\theta_b^T) \le  \mathcal{L}(\theta_b^0) - \frac{\eta}{2} \sum_{t=1}^T \mathbf{E} [ \|\nabla \hat{\mathcal{L}}_z(\theta_b^{t-1}) \|_2^2 ] \] 
\[ \sum_{t=1}^T \mathbf{E} [ \|\nabla \hat{\mathcal{L}}_z(\theta_b^{t-1}) \|_2^2 ] \le \frac{2 (\mathcal{L}(\theta^0) -\mathcal{L}(\theta^*)) }{\eta} \]

Since $\bar{g}^2_b := \min_{t\in[T]} \mathbf{E}[\|\nabla \hat{\mathcal{L}}_b(\theta^t_b) \|^2_2]$,
\[ \bar{g}_b^2 \le \frac{2 (\mathcal{L}(\theta^0) -\mathcal{L}(\theta^*)) }{\eta T} \]
Solving for $\bar{g}_b \le \epsilon$ gives us,
\[ T \ge \frac{2 (\mathcal{L}(\theta^0) -\mathcal{L}(\theta^*))}{\eta \epsilon^2}\]

The convergence rate of \textit{never trade} is $\mathcal{O}(1/ \epsilon^2)$, which can be slower than \textit{always trade} $\mathcal{O}\big(1/ (\epsilon^2 + \delta_b)\big)$ by the constant factor, gain-from-trade.
\end{proof}



\subsection{Linear Model Case Study}
\label{app:linear}

In addition to general case, we study the convergence for the instantiation in Sec. \ref{sec: init}, where agents are trading parameters in linear models, and the gain-from-trade computed by the broker is defined as
\[
    \Delta ^t_u := \frac{\|\dot{\theta}_u^t - \theta^*\|^2_2}{\|\bar{\theta}_u^t - \theta^*\|^2_2}, \quad u \in \{a, b\}
\]

In this setting, we write the empirical loss function as
\[
\hat{\mathcal{L}}_u(\theta) = \|X_u\theta - Y_u\|^2_2
\]

Recall that the updated rules are
\begin{align*}
\dot \theta^{t}_a &= \theta^{t-1}_a - \eta \nabla \hat{\mathcal{L}}_a(\theta^{t-1}_a) \qquad & \dot\theta^{t}_b &= \theta^{t-1}_b - \eta \nabla \hat{\mathcal{L}}_b(\theta^{t-1}_b) \\
\bar \theta^{t}_a &= (1 - \alpha) \cdot \dot \theta^t_a + \alpha\cdot\dot\theta^t_b \qquad & \bar\theta^{t}_b &= (1 - \beta) \cdot\dot \theta^t_b + \beta\cdot\dot\theta^t_a \\
\theta^{t}_a &= (1-I_a^{t})\cdot \dot\theta^{t}_a + I_a^{t}\cdot \bar\theta_a^t \qquad & \theta^{t}_b &= (1-I_b^{t})\cdot \dot\theta^{t}_b + I_b^{t}\cdot \bar\theta_b^t \\
\end{align*}

\begin{lemma} 
\label{lem:loss_ratios}
Let $X_u^TX_u$ be a p.d. matrix. Let $\lambda_{\max}(X_u^TX_u), \lambda_{\min}(X_u^TX_u)$ be the maximum and minimum eigenvalues of $X_u^TX_u$. Denote the condition number $\rho_u := \frac{\lambda_{\max}(X_u^TX_u)}{\lambda_{\min}(X_u^TX_u)}$, and $Y_u = X_u\theta^*$ then,
\begin{equation}
    \frac{1}{\rho_u \Delta^t_u}\hat{\mathcal{L}}_u(\dot\theta^t_u) \le \hat{\mathcal{L}}_u(\bar\theta^t_u) \le \frac{\rho_u}{ \Delta^t_u }\hat{\mathcal{L}}_u(\dot\theta^t_u) 
\end{equation}
\end{lemma}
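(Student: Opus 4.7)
The plan is to rewrite the empirical loss as a quadratic form in the parameter error $\theta - \theta^*$ and then apply standard eigenvalue bounds (Rayleigh quotient inequalities) to sandwich the loss between multiples of the squared norm of that error. Since $Y_u = X_u \theta^*$, we have
\[
\hat{\mathcal{L}}_u(\theta) = \|X_u\theta - X_u\theta^*\|_2^2 = (\theta - \theta^*)^T X_u^T X_u (\theta - \theta^*),
\]
so for any $\theta$,
\[
\lambda_{\min}(X_u^T X_u)\,\|\theta - \theta^*\|_2^2 \;\le\; \hat{\mathcal{L}}_u(\theta) \;\le\; \lambda_{\max}(X_u^T X_u)\,\|\theta - \theta^*\|_2^2.
\]
This is the one nontrivial input and uses only that $X_u^T X_u$ is positive definite.

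Next I would apply this sandwich separately to $\dot\theta_u^t$ and $\bar\theta_u^t$, and then eliminate the norm terms using the definition $\Delta_u^t = \|\dot\theta_u^t - \theta^*\|_2^2 / \|\bar\theta_u^t - \theta^*\|_2^2$, i.e.\ $\|\bar\theta_u^t - \theta^*\|_2^2 = \|\dot\theta_u^t - \theta^*\|_2^2/\Delta_u^t$. For the upper bound on $\hat{\mathcal{L}}_u(\bar\theta_u^t)$, I would use the $\lambda_{\max}$ inequality on $\bar\theta_u^t$ and the $\lambda_{\min}$ inequality on $\dot\theta_u^t$ (the latter to convert $\|\dot\theta_u^t - \theta^*\|_2^2$ back into an upper bound in terms of $\hat{\mathcal{L}}_u(\dot\theta_u^t)$), yielding a prefactor $\lambda_{\max}/(\lambda_{\min}\Delta_u^t) = \rho_u/\Delta_u^t$. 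For the lower bound I would symmetrically use $\lambda_{\min}$ on $\bar\theta_u^t$ and $\lambda_{\max}$ on $\dot\theta_u^t$, producing the prefactor $\lambda_{\min}/(\lambda_{\max}\Delta_u^t) = 1/(\rho_u \Delta_u^t)$.

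There is no serious obstacle here; the statement is essentially a two-line consequence of the Rayleigh quotient bounds plus the algebraic definition of $\Delta_u^t$. The only thing to be careful about is matching the direction of the eigenvalue inequality with the direction of the bound being proved (i.e.\ using $\lambda_{\max}$ for upper bounds on the loss and $\lambda_{\min}$ for lower bounds, and vice versa when inverting to go from norms back into losses), so that the combined prefactor collects cleanly into $\rho_u$ rather than $\rho_u^2$ or $1$.
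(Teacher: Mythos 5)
Your proposal is correct and follows essentially the same route as the paper's proof: express the loss as the quadratic form $(\theta-\theta^*)^T X_u^T X_u(\theta-\theta^*)$, sandwich it via the extreme eigenvalues, and combine the two bounds with the definition of $\Delta_u^t$ to get the $\rho_u/\Delta_u^t$ and $1/(\rho_u\Delta_u^t)$ prefactors. The only cosmetic difference is that the paper phrases the conclusion as bounds on the ratio $\hat{\mathcal{L}}_u(\dot\theta_u^t)/\hat{\mathcal{L}}_u(\bar\theta_u^t)$ rather than on $\hat{\mathcal{L}}_u(\bar\theta_u^t)$ directly, which is equivalent.
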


\begin{proof}
In the noiseless case, $Y_u=X_u\theta^*$, which gives
\[
\hat{\mathcal{L}}_u(\theta) = \|X_u\theta - Y_u\|_2^2 = \|X_u\theta - X_u\theta^*\|_2^2 = \|X_u(\theta - \theta^*)\|_2^2
\]

Another way to write this is $\hat{\mathcal{L}}_u(\theta) = (\theta-\theta^*)^TX_u^TX_u(\theta-\theta^*)$. Since $X_u^TX_u$ is a positive definite matrix, we have
\[
\lambda_{\min}(X_u^TX_u) \|\theta-\theta^*\|_2^2 \le \hat{\mathcal{L}}_u(\theta) \le \lambda_{\max}(X_u^TX_u) \|\theta-\theta^*\|_2^2 
\]

This implies the following
\[
\frac{\hat{\mathcal{L}}_u(\dot\theta_u^t)}{\hat{\mathcal{L}}_u(\bar \theta_u^t)} \le \frac{\lambda_{\max}(X_u^TX_u) \|\dot\theta^t_u-\theta^*\|_2^2}{\lambda_{\min}(X_u^TX_u) \|\bar\theta^t_u-\theta^*\|_2^2} = \rho_u \Delta^t_u
\]
and 
\[
\frac{\hat{\mathcal{L}}_u(\dot\theta_u^t)}{\hat{\mathcal{L}}_u(\bar\theta_u^t)} \ge \frac{\lambda_{\min}(X_u^TX_u) \|\dot\theta^t_u-\theta^*\|_2^2}{\lambda_{\max}(X_u^TX_u) \|\bar\theta^t_u-\theta^*\|_2^2} = \frac{\Delta^t_u}{\rho_u}
\]
\end{proof}

\begin{theorem}(Always Trade, Linear Case) 
Let $\Delta^t_b$ be the gain-from-trade of agent $B$ at round $t$. Assume $\Delta^t_b > 1$ for all rounds, making agent $B$ always buy parameters. Let $\delta_b' = \min_{t} \Delta_b^t$ and $\delta_b'/\rho_b>1$. Then for any $\epsilon \in (0,1)$ at the end of round $T$, we have $\hat{\cal{L}}_z(\theta^T_b) - \hat{\cal{L}}_z(\theta^*) \le \epsilon$, for
\[
T \ge \frac{1}{\log (\delta_b'/\rho_b)} \log \Big (\frac{\hat{\cal{L}}_b(\theta^{0}_b) - \hat{\cal{L}}_z(\theta^*)}{\epsilon} \Big)
\]
\label{thm:conv_always_trade_linear}
\end{theorem}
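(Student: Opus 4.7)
The plan is to establish a per-round multiplicative contraction of agent $B$'s empirical loss by a factor $\rho_b/\delta_b' < 1$, telescope it over the $T$ rounds, and then transfer the resulting bound on $\hat{\mathcal{L}}_b(\theta_b^T)$ to $\hat{\mathcal{L}}_z(\theta_b^T)$ via the broker-dominance assumption $\hat{\mathcal{L}}_z \le \hat{\mathcal{L}}_b$ from Sec.~\ref{sec: converge}. The standing assumption $\delta_b'/\rho_b > 1$ is exactly what turns Lemma~\ref{lem:loss_ratios} into a nontrivial contraction, and noiselessness of the broker's labels ($Y_z = X_z\theta^*$) ensures $\hat{\mathcal{L}}_z(\theta^*) = 0$, which is what allows the baseline term $\hat{\mathcal{L}}_z(\theta^*)$ to sit harmlessly inside the log in the stated threshold.

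First I would combine two ingredients to obtain the per-round contraction. Since agent $B$ always buys, $\theta_b^t = \bar\theta_b^t$, so Lemma~\ref{lem:loss_ratios} gives
\[
\hat{\mathcal{L}}_b(\theta_b^t) \;=\; \hat{\mathcal{L}}_b(\bar\theta_b^t) \;\le\; \frac{\rho_b}{\Delta_b^t}\,\hat{\mathcal{L}}_b(\dot\theta_b^t) \;\le\; \frac{\rho_b}{\delta_b'}\,\hat{\mathcal{L}}_b(\dot\theta_b^t).
\]
The local gradient step is non-expansive on the quadratic $\hat{\mathcal{L}}_b$, i.e. $\hat{\mathcal{L}}_b(\dot\theta_b^t) \le \hat{\mathcal{L}}_b(\theta_b^{t-1})$, for any step size $\eta \in (0,1/L_b)$ with $L_b = 2\lambda_{\max}(X_b^T X_b)$; this is the standard descent lemma for an $L$-smooth convex function, already invoked for Proposition~\ref{prop:no_trade_result}. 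Chaining the two gives the one-step contraction $\hat{\mathcal{L}}_b(\theta_b^t) \le (\rho_b/\delta_b')\,\hat{\mathcal{L}}_b(\theta_b^{t-1})$.

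Iterating across $t=1,\dots,T$ produces $\hat{\mathcal{L}}_b(\theta_b^T) \le (\rho_b/\delta_b')^T\,\hat{\mathcal{L}}_b(\theta_b^0)$. Applying $\hat{\mathcal{L}}_z(\theta)\le\hat{\mathcal{L}}_b(\theta)$ pointwise and using $\hat{\mathcal{L}}_z(\theta^*)=0$ on both sides yields
\[
\hat{\mathcal{L}}_z(\theta_b^T) - \hat{\mathcal{L}}_z(\theta^*) \;\le\; \Bigl(\tfrac{\rho_b}{\delta_b'}\Bigr)^T \bigl(\hat{\mathcal{L}}_b(\theta_b^0) - \hat{\mathcal{L}}_z(\theta^*)\bigr).
\]
Requiring the right-hand side to be at most $\epsilon$ and taking logarithms---valid because $\delta_b'/\rho_b>1$ makes $\log(\delta_b'/\rho_b)>0$---gives the claimed threshold on $T$.

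The conceptual obstacle the proof exposes is the gap between \emph{parameter-error} improvement and \emph{loss} improvement: Lemma~\ref{lem:loss_ratios} costs a factor of $\rho_b$ when translating $\Delta_b^t$ into a ratio of loss values, so mere gain-from-trade $\Delta_b^t > 1$ is not enough to drive geometric convergence---one genuinely needs $\delta_b' > \rho_b$, and the rate $1/\log(\delta_b'/\rho_b)$ in the final bound makes this conditioning-vs-gain tradeoff explicit. A minor technical point is verifying the step-size condition that makes the single gradient step non-expansive on $\hat{\mathcal{L}}_b$; the statement leaves this implicit, but it follows from the $L$-smoothness of the quadratic regression loss and is effectively free.
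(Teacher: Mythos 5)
Your proposal is correct and follows essentially the same route as the paper: Lemma~\ref{lem:loss_ratios} plus the descent lemma give the per-round contraction $\hat{\mathcal{L}}_b(\theta_b^t) \le (\rho_b/\Delta_b^t)\,\hat{\mathcal{L}}_b(\theta_b^{t-1})$, which is telescoped and then transferred to $\hat{\mathcal{L}}_z$ via broker dominance. The only cosmetic difference is that you invoke $\hat{\mathcal{L}}_z(\theta^*)=0$ to absorb the baseline term, whereas the paper keeps the product $\prod_t \rho_b/\Delta_b^t$ and handles that term using only non-negativity of $\hat{\mathcal{L}}_z$ together with $\rho_b/\Delta_b^t<1$; both are valid here.
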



\begin{proof}
The proof follows by using Lemma \ref{lem:loss_ratios} to establish a recurrence relation on $\hat{\mathcal{L}}_b(\theta^t_b)$ and then using the fact that $\hat{\mathcal{L}}_z(\theta) \le \hat{\mathcal{L}}_b(\theta), \forall \theta$ gives us the result. The steps are as follows, 
\begin{align*}
  \hat{\mathcal{L}}_b(\theta^t_b) = \hat{\mathcal{L}}_b(\bar\theta^t_b) &\le \frac{\rho_b}{\Delta_b^t} \hat{\mathcal{L}}_b(\dot \theta^t_b) \\
  &\le \frac{\rho_b}{\Delta_b^t} \Big ( \hat{\mathcal{L}}_b(\theta^{t-1}_b) - \frac{\eta}{2} \|\nabla \hat{\mathcal{L}}_b(\theta^{t-1}_b)\|_2^2 \Big ) \\
  &\le \frac{\rho_b}{\Delta_b^t} \hat{\mathcal{L}}_b(\theta^{t-1}_b)
\end{align*}

The second step follows from the fact that $\dot \theta_b^t = \theta^{t-1}_b - \eta \nabla \hat{\mathcal{L}}_b(\theta^{t-1}_b)$ and since $\hat{\mathcal{L}}_b$ is L-smooth, this implies the iterates $\{\dot \theta^t_b\}$ satisfy descent lemma, which is
\[
\hat{\mathcal{L}}_b(\dot \theta^t_b)  \le \hat{\mathcal{L}}_b(\theta^{t-1}_b) - \frac{\eta}{2} \|\nabla \hat{\mathcal{L}}_b(\theta^{t-1}_b)\|_2^2
\]

Now, since $\hat{\mathcal{L}}_z$ is non-negative, 
\begin{align*}
  \hat{\mathcal{L}}_b(\theta^t_b) - \hat{\mathcal{L}}_z(\theta^*) &\le \frac{\rho_b}{\Delta_b^t}  \hat{\mathcal{L}}_b(\theta^{t-1}_b) - \hat{\mathcal{L}}_z(\theta^*)  \\
  &\le \frac{\rho_b}{\Delta_b^t}  \Big ( \hat{\mathcal{L}}_b(\theta^{t-1}_b) - \hat{\mathcal{L}}_z(\theta^*) \Big ) \quad \text{since } \rho_b/\Delta_b^t < 1
\end{align*}
Using the above recurrence we get,
\begin{align*}
   \hat{\mathcal{L}}_b(\theta^T_b) - \hat{\mathcal{L}}_z(\theta^*) \le \Big (\prod_{t=1}^T \frac{\rho_b}{\Delta_b^t} \Big ) \Big ( \hat{\mathcal{L}}_b(\theta^{0}_b) - \hat{\mathcal{L}}_z(\theta^*) \Big )
\end{align*}
Since $\delta_b' = \min_t \Delta_b^t, \forall t$, then for desired error $\epsilon$ on the upper bound we  get,
\[
    T \ge \frac{1}{\log (\delta_b'/\rho_b)}\log \Big (\frac{\hat{\mathcal{L}}_b(\theta^{0}_b) - \hat{\mathcal{L}}_z(\theta^*)}{\epsilon} \Big ) 
\]
\end{proof}

\begin{proposition}(Never Trade, Linear Case) Let agents never trade and ignore the market i.e. $I_a^t=0, I_b^t=0$ for all $t$. Suppose they run individual gradient descent on functions $\hat{\cal{L}}_a, \hat{\cal{L}}_b$ using exact line search. Then for any $\epsilon \in (0,1)$ at the end of round $T$, we have $\hat{\cal{L}}_z(\theta^T_b)- \hat{\cal{L}}_z(\theta^*) \le \epsilon$, for $T \ge \frac{1}{\log \big( (\rho_b+1)/(\rho_b-1)\big)} \log \Big ( \frac{\hat{\cal{L}}_b(\theta^{0}_b) - \hat{\cal{L}}_z(\theta^*)}{\epsilon} \Big)
$.
\label{thm:conv_never_trade_linear}
\end{proposition}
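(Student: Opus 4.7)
The plan is to exploit the quadratic structure of $\hat{\mathcal{L}}_b$ in the noiseless linear-regression setting and appeal to the classical linear-convergence rate for gradient descent with exact line search on a strongly convex quadratic. Since $Y_b = X_b\theta^*$ by assumption in the linear case study (Sec.~\ref{sec: init}), I rewrite $\hat{\mathcal{L}}_b(\theta) = (\theta-\theta^*)^\top X_b^\top X_b (\theta-\theta^*)$, so $\hat{\mathcal{L}}_b$ is a positive-definite quadratic with Hessian $2X_b^\top X_b$, condition number $\rho_b$, and unique minimizer $\theta^*$ at value $0$. The same structural observation shows $\hat{\mathcal{L}}_z(\theta^*)=0$.

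Next, I use the exact-line-search step $\eta_t = \|g_t\|_2^2 / (g_t^\top X_b^\top X_b\, g_t)$ with $g_t := \nabla \hat{\mathcal{L}}_b(\theta_b^{t-1})$. Substituting into $\hat{\mathcal{L}}_b(\theta_b^{t-1}-\eta_t g_t)$ and using $g_t = 2 X_b^\top X_b(\theta_b^{t-1}-\theta^*)$ yields the exact recursion
\begin{equation*}
\hat{\mathcal{L}}_b(\theta_b^{t}) \;=\; \left(1 - \frac{(g_t^\top g_t)^2}{(g_t^\top X_b^\top X_b\, g_t)\,(g_t^\top (X_b^\top X_b)^{-1} g_t)}\right) \hat{\mathcal{L}}_b(\theta_b^{t-1}).
\end{equation*}
Kantorovich's inequality bounds the parenthesized factor by $\bigl((\rho_b-1)/(\rho_b+1)\bigr)^2$, which in particular is at most $(\rho_b-1)/(\rho_b+1)$; using this weaker form matches the constant in the stated rate. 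Iterating the contraction $T$ times gives $\hat{\mathcal{L}}_b(\theta_b^T) \le \bigl((\rho_b-1)/(\rho_b+1)\bigr)^T \hat{\mathcal{L}}_b(\theta_b^0)$.

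Finally, I transfer the bound to the broker's loss using the standing assumption from Sec.~\ref{sec: converge} that $\hat{\mathcal{L}}_z(\theta) \le \hat{\mathcal{L}}_b(\theta)$ for all $\theta$: since $\hat{\mathcal{L}}_z(\theta^*)=0$, we get $\hat{\mathcal{L}}_z(\theta_b^T) - \hat{\mathcal{L}}_z(\theta^*) = \hat{\mathcal{L}}_z(\theta_b^T) \le \hat{\mathcal{L}}_b(\theta_b^T) \le \bigl((\rho_b-1)/(\rho_b+1)\bigr)^T \bigl(\hat{\mathcal{L}}_b(\theta_b^0)-\hat{\mathcal{L}}_z(\theta^*)\bigr)$. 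Requiring the right-hand side to be at most $\epsilon$ and taking logarithms inverts to the stated condition on $T$.

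The main obstacle is the Kantorovich step: writing the exact-line-search recursion in the symmetric form above and correctly bounding $(g^\top g)^2/\bigl((g^\top A g)(g^\top A^{-1} g)\bigr)$ by the extremal eigenvalues of $A=X_b^\top X_b$. Everything else---propagating a one-step contraction across $T$ rounds, applying the ordering $\hat{\mathcal{L}}_z \le \hat{\mathcal{L}}_b$, and solving the resulting exponential inequality for $T$---is routine bookkeeping and parallels the structure of the proofs already given for Theorem~\ref{thm:conv_always_trade_linear} in Appendix~\ref{app:linear}.
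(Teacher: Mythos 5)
The paper states this proposition without giving a proof (Appendix \ref{app:linear} goes straight from the statement to the discussion), so there is nothing to compare against line by line; your argument is the classical steepest-descent analysis that the stated rate implicitly relies on, and it is correct. The reduction to the quadratic form $(\theta-\theta^*)^\top X_b^\top X_b(\theta-\theta^*)$, the Kantorovich bound giving the per-step contraction $\bigl((\rho_b-1)/(\rho_b+1)\bigr)^2 \le (\rho_b-1)/(\rho_b+1)$, the transfer to the broker's loss via $\hat{\mathcal{L}}_z \le \hat{\mathcal{L}}_b$ and $\hat{\mathcal{L}}_z(\theta^*)=0$ in the noiseless setting, and the inversion for $T$ all check out and exactly reproduce the stated constant. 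One small slip: since the Hessian of $\hat{\mathcal{L}}_b$ is $2X_b^\top X_b$, the exact-line-search step is $\eta_t = \|g_t\|_2^2/\bigl(2\,g_t^\top X_b^\top X_b\, g_t\bigr)$, not $\|g_t\|_2^2/\bigl(g_t^\top X_b^\top X_b\, g_t\bigr)$ as you wrote (the latter would in fact yield zero progress on this quadratic); the recursion you display is nonetheless the correct one for the true minimizing step, so nothing downstream is affected, but the formula should be fixed.
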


\paragraph{Discussion.} 
First, we note that in both settings (always trade and never trade) the convergence rate is $\mathcal{O}(\log(\frac{1}{\epsilon}))$, which is expected since the loss function getting minimized is strongly-convex.
The differences are mainly visible in the constant factors.
Most importantly, the leading constant in the result of Theorem \ref{thm:conv_always_trade_linear} depends on $\delta_b'$--- the minimum improvement ratio over $T$ runs.
A higher improvement ratio implies a smaller $T$ (i.e. faster convergence) in the worst-case scenario.
This justifies the benefit of purchasing the parameters in this setting.
In contrast, the leading constant in the result of Proposition \ref{thm:conv_never_trade_linear} only depends on $\rho_b$.
We further note that $\frac{\rho_b +1}{\rho_b-1} \ge \frac{1}{\rho_b}$, so to claim better rate in Theorem \ref{thm:conv_always_trade_linear} we would need $\delta_b' \ge \frac{\rho_b^2 + \rho_b}{\rho_b-1}$.

\section{Experimental Details}
\label{app:exp}

In this section, we place more details about our experimental setups. All the implementations are open sourced on Github repository\footnote{\url{https://github.com/SprocketLab/parameter-market}}.

\subsection{Data Endowment}
In the experiments for collaborative setting (Sec. \ref{sec: collab}), agents are limited to collecting a part of dataset on MNIST \cite{lecun1998gradient}, CIFAR10 \cite{Krizhevsky09learningmultiple}, and TinyImageNet \cite{le2015tiny}.
For MNIST and CIFAR10, we make agent $A$ collect 10\% of data points from class 0 $\sim$ 4, but obtain full data points from class 5 $\sim$ 9.
For TinyImageNet, agent $A$ collects 10\% of data points from class 0 $\sim$ 99, but obtains full data points from class 100 $\sim$ 199.
Agent $B$ collects data points in the opposite way.
We give broker full data points to evaluate agents' model performance (i.e. $\hat{\mathcal{L}}_z(\theta)$) and to inform gain-from-trade in the try-before-purchase mechanism.

\subsection{Model Training Settings}
MLP models are trained with 5 hidden layers, each comprising 32 units, using ReLU activations between layers and no normalization. We employ SGD as the optimizer with a learning rate of $10^{-2}$. ResNet20 models are trained from scratch and the optimizer used is Adam with a learning rate of $10^{-2}$. In both settings, the batch size is set to 256.

\section{Trading Guidance in the Market}
\label{app:more exp}

In the following paragraphs, we provide more experimental results. 
We use the collaborative setting to trade entire parameter sets and apply model alignment.
We validate that (i) trading offers instantaneous improvements, (ii) trading after every training epoch results in the fastest convergence, (iii) trading makes agents who obtain fewer data earn more improvements relatively, and (iv) trading asynchronous parameters enhance performance as well.
Last but not least, we generalize our setting to a multi-agent market and show parameter trading enables every agent in the market to achieve higher accuracy performance compared to out-of-market training.

\begin{figure}[h!]
    \begin{center}
    \includegraphics[width=\linewidth]{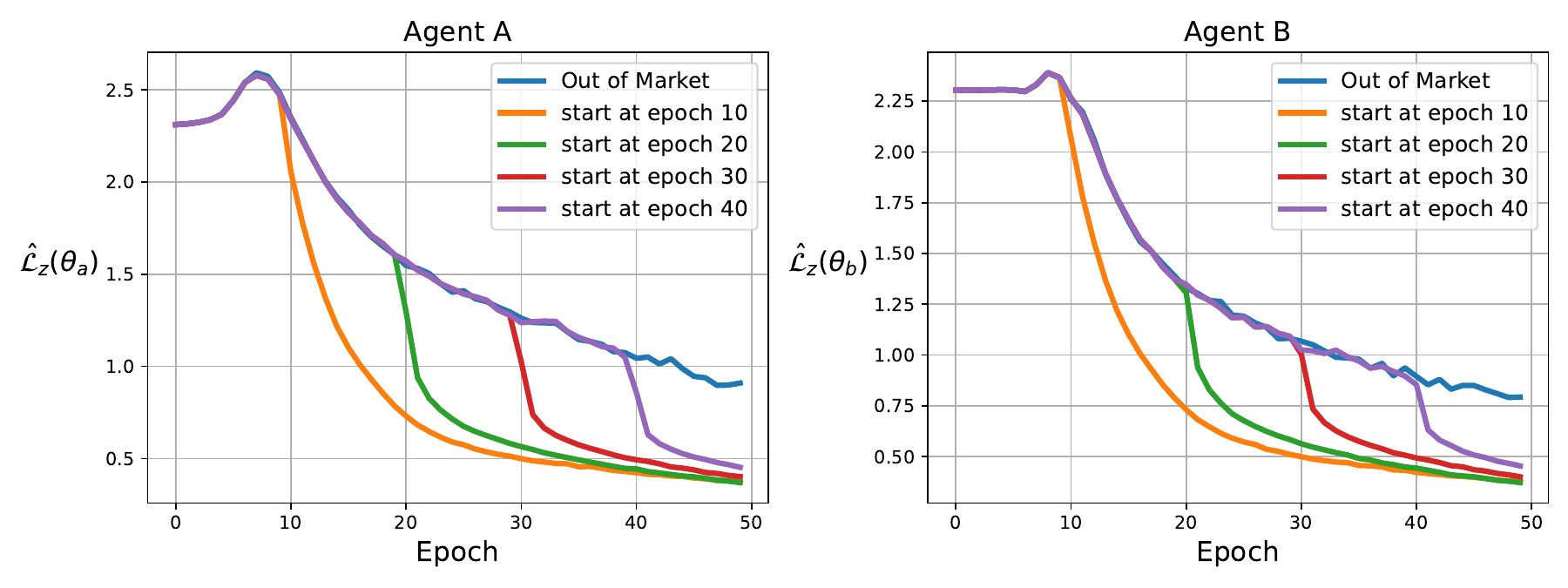}
    \end{center}
    \centering\caption{Joining the market at any epoch provides \textit{instantaneous} trading benefits.}
    \label{fig:trading period}
\end{figure}

\paragraph{How does the epoch to start trading affect the convergence?}
We study this using two MLPs with MNIST datasets. Figure \ref{fig:trading period} shows the comparison of agents' testing loss when they begin trading after different epochs, namely \{ 10, 20, 30, 40 \}. Our results indicate that joining the market at \textit{any epoch} provides immediate improvements. Additionally, training parameters and then start trading at an earlier epoch results in faster convergence (i.e. starting to trade after epoch 10 leads to the highest accuracy performance at the end).

\begin{figure}[h!]
    \begin{center}
    \includegraphics[width=\linewidth]{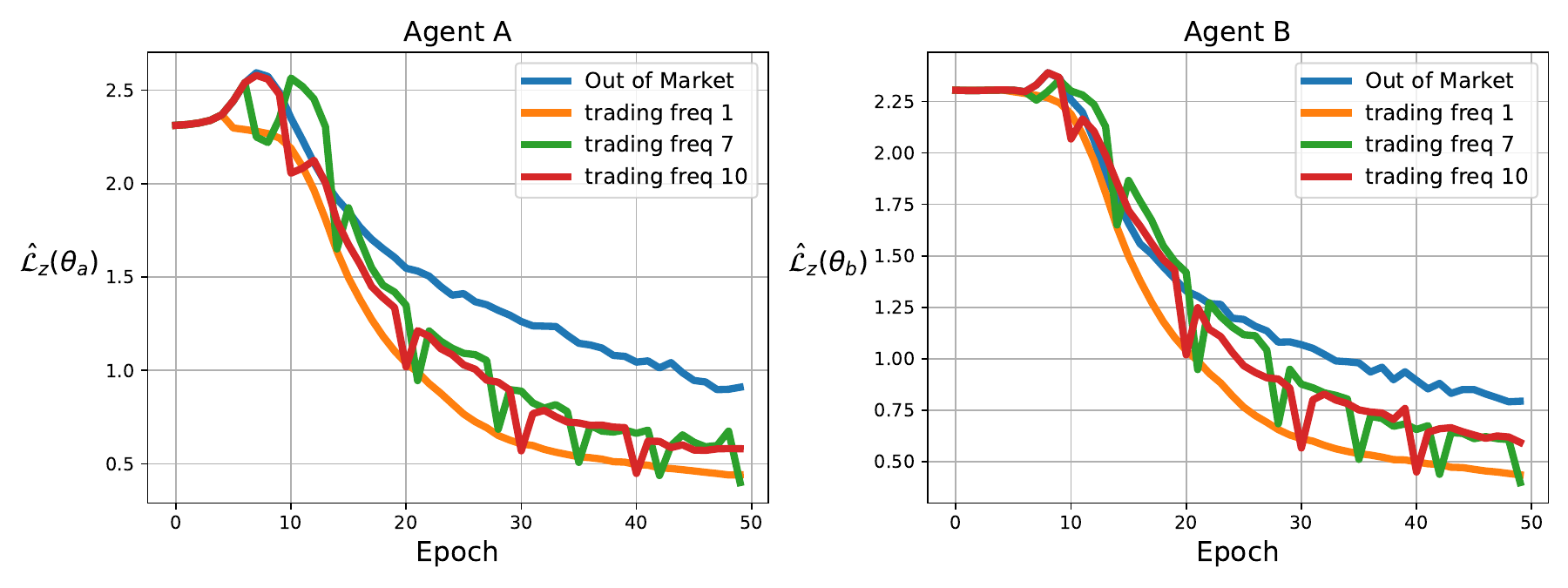}
    \end{center}
    \centering\caption{Trading parameters after \textit{every epoch} gives the fastest convergence.}
    \label{fig:trading frequency}
\end{figure}

\paragraph{How does the trading frequency affect the convergence?}
Next, we examine how the trading frequency affects convergence. Our experiment asks agents to trade after every \{ 1, 7, 10 \} epoch and compares their testing loss to agents who don't trade at all. The results, shown in Figure \ref{fig:trading frequency}, indicate that the optimal approach for agents is to trade after \textit{every epoch}, which leads to the fastest convergence. Despite the fact that testing loss increases during periods when agents don't trade, they are still able to achieve lower losses overall.

\begin{figure}[h!]
    \begin{center}
    \includegraphics[width=\linewidth]{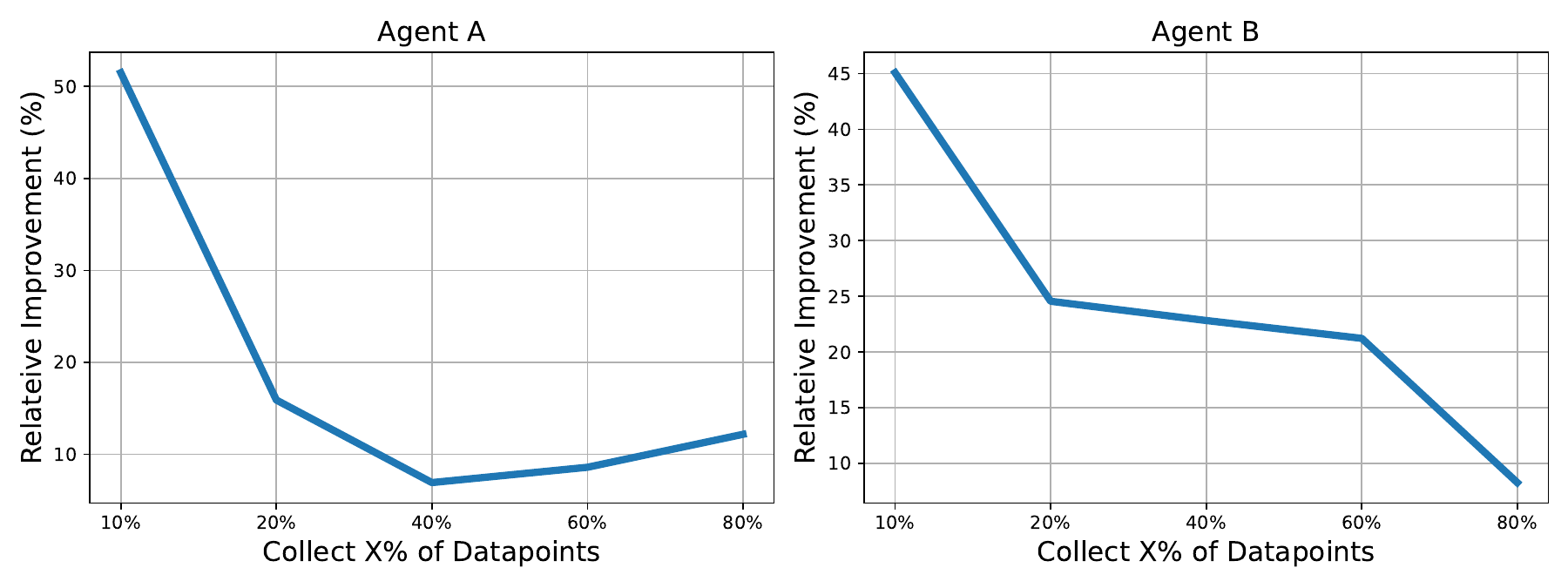}
    \end{center}
    \centering\caption{More limited access to data earns more trading benefits. Relative improvements are computed by the empirical testing loss compared to out-of-market agents.}
    \label{fig:trading deduction}
\end{figure}

\paragraph{How does data endowment affect trading benefits?}
We also experiment with different levels of data endowment, ranging from 10\% to 80\%, and investigate the impact of data endowment on trading benefits. Agents are set to collect these percentages of data points for half of the classes. The results, shown in Figure \ref{fig:trading deduction}, indicate that agents who are more limited in access to data can benefit more from trading relatively. When agent $A$ and agent $B$ endowed with 10\% data points can earn the most trading benefits through improving testing loss by \textbf{51.5\%} and \textbf{45.1\%}, respectively.

\begin{table}[t]\small
\centering
\begin{tabular}{@{}cccc@{}}
\toprule
 &  & Agent $A$ & Agent $B$ \\ \cmidrule(l){3-4} 
 &  & Testing Acc. (\%) & Testing Acc. (\%) \\ \midrule
\multirow{1}{*}{\begin{tabular}[c]{@{}c@{}}MNIST + \\ MLP\end{tabular}} & out-of-market & 68.50\% & 72.97\% \\
 & w alignment & {\cellcolor{blue!20}\textbf{88.78\%}} & {\cellcolor{blue!20}\textbf{82.51\%}} \\ \midrule
\multirow{1}{*}{\begin{tabular}[c]{@{}c@{}}CIFAR10 + \\ ResNet20\end{tabular}} & out-of-market & 71.14\% & 70.56\% \\
 & w alignment & {\cellcolor{blue!20}\textbf{78.41\%}} & {\cellcolor{blue!20}\textbf{73.40\%}} \\ \bottomrule
\end{tabular}
\medskip
\centering\caption{\label{tab:asyn}We conduct a new experiment on asynchronous parameter trading. Both agents are asked to train the model for 60 epochs in total. Agent $A$ is instructed to delay trading. Results emphasizes the crucial role of brokers in aligning parameters and optimizing purchase weights to eliminate differences not only in synchronous but also in asynchronous parameter trading.}
\end{table}

\paragraph{How does delayed parameter trading affect the performance?} Now we consider another practical scenario that permits asynchronous parameter trading. In this setting, both agents are asked to train the model for 60 epochs in total. Agent $B$ trains the model for 5 epochs, trades in the market for 50 epochs, and then trains the model for the remaining 5 epochs. On the other hand, agent $A$ is instructed to delay trading. Agent $A$ trains the model for 10 epochs and then trades in the market for 50 epochs. In Table ~\ref{tab:asyn}, our results show that the parameter market allows delayed agent actions to enhance performance as well, with trading in alignment yielding optimal accuracy as expected. This emphasizes the crucial role of brokers in aligning parameters and optimizing purchase weights to eliminate differences \textit{not only in synchronous but also in asynchronous parameter trading}. Note as well that in Table \ref{tab:full param trading table}, agents train for 5 epochs and then trade synchronously for 55 epochs, which have additional 5 epochs for training then trading. This explains the degraded performance of agent $B$.

\begin{table}[t]\small
\centering
\begin{tabular}{@{}ccccc@{}}
\toprule
 &  & Agent $A$ & Agent $B$ & Agent $C$ \\ \cmidrule(l){3-5} 
 &  & Testing Acc. (\%) & Testing Acc. (\%) & Testing Acc. (\%) \\ \midrule
\multirow{1}{*}{\begin{tabular}[c]{@{}c@{}}MNIST + \\ MLP\end{tabular}} & out-of-market & 68.07\% & 73.79\% & 76.38\% \\
 & w alignment & {\cellcolor{blue!20}\textbf{82.29\%}} & {\cellcolor{blue!20}\textbf{77.08\%}} & {\cellcolor{blue!20}\textbf{77.08\%}} \\ \midrule
\multirow{1}{*}{\begin{tabular}[c]{@{}c@{}}CIFAR10 + \\ ResNet20\end{tabular}} & out-of-market & 67.62\% & 74.99\% & 74.12\% \\
 & w alignment & {\cellcolor{blue!20}\textbf{79.53\%}} & {\cellcolor{blue!20}\textbf{77.77\%}}
 & 
 {\cellcolor{blue!20}\textbf{76.80\%}}
 \\ \midrule
\multirow{1}{*}{\begin{tabular}[c]{@{}c@{}}TinyImageNet + \\ ResNet20\end{tabular}} & out-of-market & 20.51\% & 17.99\% & 18.50\% \\
 & w alignment & {\cellcolor{blue!20}\textbf{32.62\%}} & {\cellcolor{blue!20}\textbf{32.07\%}} & {\cellcolor{blue!20}\textbf{31.08\%}} \\ \bottomrule
\end{tabular}
\medskip
\centering\caption{\label{tab: more agent} We generalize our setting to involve more agents. In the three-agent market, results also follow expectations. The proposed parameter trading is able to help agents achieve higher accuracy performance compared to conventional model training without trading (out-of-market).}
\end{table}

\paragraph{How does more agents involved affect trading benefits?}
For the sake of simplicity and to demonstrate the viability of parameter trading, we present a two-agent market in the main body, which is a commonly used economic model for studying many settings. Nevertheless, it is straightforward to expand the proposed trading framework to include multiple agents. The trading logic can be generalized to look for trades that enable agents to make the largest advancements. This means purchasing parameters that can bring the largest gain-from-trade ($\Delta_u^t$). To validate this, we generalize our setting to involve more agents. We implement a three-agent market by reusing the same data endowment for agent $A$ and agent $B$ and display performance results in Table \ref{tab: more agent}. We make a third agent, agent $C$, collect 10\% of data points from the class 3 $\sim$ 7 in MNIST and CIFAR10, and 10\% of data points from class 50 $\sim$ 149 in TinyImageNet. In the three-agent market, results show that the generalization functions as expected: the proposed market is able to help agents achieve higher accuracy performance compared to conventional model training without trading.

\section{Discussion}
\label{app:limitations}

In this section, we discuss limitations, (potential) concerns, the needs of broker's computational infrastructure, and the common prior that we assume in the valuation.

\subsection{Limitations} 
There are two primary limitations toward building viable parameter markets. 

First, the reliability of the broker is crucial to the success of the market. 
The broker must report trade gains and losses without any bias to ensure the market remains efficient. 
Suppose that the broker acts adversarially, rather than being a neutral third party as intended. 
In this case, the broker could potentially use models (or their combination) from both parties, could misinform one party or another of their potential benefit and engage in front-running, etc. 
Hence, the requirement that some entity is willing to serve as an impartial broker is one of the limits of our work. 

To achieve this, we can motivate third parties to play this role by earning transaction fee to facilitate trades, or by giving other types of incentives \cite{yavacs1994economics}. 
On the other hand, such a limitation may not be substantial. 
In fact, having a reliable broker is \textbf{essential for any trading market}, just as it is in real-life scenarios (for example, stockbrokers help perform huge numbers of trades, and large-scale escrow organizations help aid in billion-dollar transactions).
This similar requirement is accepted and used in practice in other settings, such as (non-machine learning) markets \cite{yavacs1994economics}. 

An additional limitation is that while agents are allowed to train models using various network architectures for a range of downstream tasks, the parameter sets to be traded require a certain alignment. 
We believe this limitation is not too severe: it can be overcome, for example, by distilling knowledge onto the same space and dimension for merging \cite{khanuja2021mergedistill}.

\subsection{(Potential) Concerns}
One potential concern is about model copyright.
In fact, one of the motivations for proposing parameter marketplace is precisely \textit{the fact that trading data is more challenging due to the state of digital property rights}. 
Parameter trading, on the other hand, does not suffer from the same limitations, at least in many present legal frameworks.
That is, parameters are typically owned by organizations or individuals training their models.
Because of this, voluntarily trading parameters does not require breaking any type of copyright law.
Broker is the only additional party to access to these.
Besides, we note that our market permits the trading of subsets of parameters (see Sec. \ref{sec: subset}) and transferred parameters only if a trade is made, thereby safeguarding a certain degree of model confidentiality.

Another potential concern is how to prevent out-of-bounds parameter usage (i.e., how to ensure a buyer does not publicly release the purchased parameters, sell them to another unauthorized party).
There is a wide array of existing research on this question.
Applicable tools enable users to check the purchased weights by creating transaction hashes \cite{lawrenz2019blockchain} and watermarks \cite{boenisch2021systematic}.

\subsection{Broker's Infrastructure}
It is essential to concern the accessibility of computational resources for the broker.
In our framework, the broker has two main computational requirements for conducting a trade.
Firstly, the broker needs to help agents validate the model's performance through inference, which is generally less intensive than training \cite{aminabadi2022deepspeed}, and the amount of validation data needed does not need to be high \cite{kossen2021active}. 
Second, the broker assists in aligning parameters for both parties using an approximation algorithm for a bilinear assignment problem \cite{custic2017bilinear}. 
This algorithm has been shown to perform efficiently in experiments. 
Hence, we believe that the computational needs can be met by charging transaction fees to the agents.
In other words, the computational workload for the broker is not a significant issue and can be managed.

\subsection{Common Priors in Valuation}
The common priors when agents are pricing are that buyer values parameters and sets a price arising from gain-from-trade, where we write $v_u (\dot\theta^t_{u'}) = \Delta^t_u$, and this price is derived from a known probability distribution.
Both are inspired by Myerson's auction work \cite{myerson1981optimal}, designed to help sellers estimate the values of trades.
The first prior is reasonable, since all agents consider trading benefits while assessing a parameter trade.
The more they stand to gain, the higher the price they are willing to pay.
Therefore, to maximize the seller's revenue, it is best to set the price as close as possible to the buyer's price.
Then in Theorem \ref{thm:bound main}, we help agents to bound the other's valuation.
Second, the notion that the buyer's price is a random variable drawn from a probability distribution is standard and commonly used in economic modeling.

\end{document}